%% file: stable-initialization.tex
\documentclass{article} 
\usepackage{iclr2027_conference}
\usepackage{times}

\input{math_commands.tex}

\usepackage[activate={true,nocompatibility},final,tracking=true,kerning=true,spacing=true,factor=1100,stretch=10,shrink=10]{microtype}
\microtypecontext{spacing=nonfrench}
\usepackage{amsmath,amssymb,amsfonts, mathtools}
\usepackage{amsthm}
\usepackage{commath}

\usepackage{hyperref}
\usepackage{url}
\usepackage{graphicx}
\usepackage{placeins} 
\usepackage{float}
\usepackage{booktabs}

\DeclareMathOperator{\expect}{\mathbb{E}}

\DeclareMathOperator{\Cos}{\operatorname{Cos}}

\newtheorem{definition}{Definition}
\newtheorem{lemma}{Lemma}
\newtheorem{theorem}{Theorem}

\newtheorem{remark}{Remark}
\newtheorem*{priorwork}{Prior work}
\newtheorem*{ourresult}{Our result}

\title{Stable initialization without the CLT}

\author{Simon Kuang \\
Department of Mechanical and Aerospace Engineering\\
University of California, Davis\\
\texttt{slku@ucdavis.edu} \\
\And
Kyle Chickering \\
\texttt{krchicke@ucdavis.edu} \\
\And
Xinfan Lin \\
Department of Mechanical and Aerospace Engineering\\
University of California, Davis\\
\texttt{lxflin@ucdavis.edu}
}

\usepackage{etoolbox}
\usepackage{comment}
\usepackage{xcolor}
\newtoggle{showbefore}
\newtoggle{showafter}

\togglefalse{showbefore}

\toggletrue{showafter}

\iftoggle{showbefore}{
    
}{
    \excludecomment{before}
}

\iftoggle{showafter}{
    \newenvironment{after}{\color{black}\ignorespaces}{\ignorespacesafterend}
}{
    
}

\usepackage{xcolor}

\iclrfinalcopy 
\begin{document}

\maketitle
\lhead{Preprint}

\begin{abstract}
	Successful training of deep neural networks is highly dependent on the distribution of the initial weights.
	If the weights are too large, network training blows up; if they are too small, the model fails to learn features.
	Stable initialization is the optimal moderation between these two extremes.
	The conventional theory of random networks uses the Central Limit Theorem to control inter-neuron dependencies,
	which introduces distributional approximation error and coupling between layers.
	For networks with sine activations, we derive the uniform-phase initialization, which obviates distributional approximation and fully decouples the layers.
	Ours is the first work to use the sine function's periodic symmetry.
	Models trained with the uniform-phase initialization outperform the state of the art in neural representation tasks like image and audio fitting.
	We find that our untuned models are competitive with the best-tuned baselines from previous work and support $\mu$P width scaling.
\end{abstract}

\section{Introduction}\label{sec:introduction}
Today it is widely accepted that increasing a model's representational capacity by increasing its width $N$ and depth $L$ leads to stronger performance on complex learning tasks \citep{hestness_deep_2017}.
Two decades ago, however, training large models was a serious concern: wide and deep networks are highly sensitive to the initialization of their parameters \cite[\S11.5.1]{hastie_elements_2009} and suffer from exploding or vanishing gradients, which make first-order optimization challenging \citep{glorot_understanding_2010}.
These so-called gradient instabilities motivated the search for \textbf{stable initialization} distributions for the weights.

We study a \textbf{multilayer perceptron} \(f: \mathbb R^{N_\text{in}} \to \mathbb R^{N_\text{out}}\), which is the composition
\(f(x) = (g_\text{out} \circ h \circ g_\text{in})(x)\), comprising
\begin{align*}
	g_\text{in}  & : \mathbb R^{N_\text{in}} \to \mathbb R^{N},  & x & \mapsto \sigma(W_\text{in} x + b_\text{in}),           &  & \text{the {\bfseries input layer};}
	\\
	g_\text{out} & : \mathbb R^{N} \to \mathbb R^{N_\text{out}}, & x & \mapsto W_\text{out} x + b_\text{out},                 &  & \text{the {\bfseries output layer}; and}
	\\
	h            & : \mathbb R^{N} \to \mathbb R^{N},            & x & \mapsto (h_L \circ h_{L-1} \circ \cdots \circ h_1)(x), &  & \text{the {\bfseries hidden layers}, defined by}
	\\
	h_\ell       & : \mathbb R^{N} \to \mathbb R^{N},            & x & \mapsto \sigma(W_\ell x + b_\ell),                     &  & \text{for $\ell \in \{1, 2, \ldots, L\}$}.
\end{align*}

Stable initialization encompasses two questions: (1) how to tailor the hidden layers to the activation function and (2) how to tailor the input and output layers to the task.

\paragraph{Hidden layers}
Early theory required that as \(L, N \to \infty\), the initial network's regularity properties scale as \(O(1)\), i.e.~not ``blow up'' \citep{montavon_efficient_2012,glorot_understanding_2010,he_delving_2015}.
The edge-of-chaos school argues that ``blowdown'' is equally pathological and strengthened this desideratum to \(\Theta(1)\) \citep{poole_exponential_2016,schoenholz_deep_2017,yang_mean_2017,hayou_impact_2019}.
To understand why this requirement is generally nontrivial for hidden layers, we express the Jacobian of \(h\) at a given input:
\begin{align}
	\dpd{}{x} h(x) = \mathrm{diag}(\sigma'(z_L)) W_L \cdots \mathrm{diag}(\sigma'(z_1)) W_1,
	\label{eq:jacobian-chain-rule}
\end{align}
where $z_\ell$ is the preactivation at hidden layer $\ell$.
The activation slopes \(\mathrm{diag}(\sigma'(z_\ell))\), \(\ell \in \{1,\ldots,L\}\), depend on the preactivation distribution, which in turn depends on the previous layer's input and weights.

Managing these dependencies demands sophisticated analysis, usually in four steps:
\begin{enumerate}
	\item Identify the criteria to stabilize, usually preactivation scales and gradient norms \citep{montavon_efficient_2012,glorot_understanding_2010,saxe_exact_2014,he_delving_2015,hanin_how_2018,pennington_resurrecting_2017,pennington_spectrum_2018,xiao_dynamical_2018,hayou_stable_2021,yang_tensor_2022}.
	\item Calculate the criteria, approximately or exactly, for a location-scale parametric family, usually Normal, of preactivation distributions \citep{poole_exponential_2016,schoenholz_deep_2017,klambauer_self-normalizing_2017,hayou_impact_2019}.
	      This step usually involves calculating or approximating \(\expect \sigma(Z)\),
	      \(\Var \sigma(Z)\), \(\expect \sigma'(Z)\), and \(\Var \sigma'(Z)\), where \(Z\) is a random variable that approximates the preactivation.
	\item Choose distributions for \(W_\ell\) and \(b_\ell\) such that the preactivation distributions satisfy a layerwise fixed-point recursion, provided that they are Normally distributed \citep{montavon_efficient_2012,glorot_understanding_2010,he_delving_2015,mishkin_all_2016,hayou_stable_2021}.
	\item Apply the Central Limit Theorem to argue that in the large-\(N\) limit, the preactivations are approximately Normal \citep{poole_exponential_2016,schoenholz_deep_2017,yang_mean_2017,lee_deep_2017,jacot_neural_2018,yang_tensor_2019}.
	      This closes the layerwise recursion.
\end{enumerate}
The Central Limit Theorem is the weakest link in this theory.
It incurs an error of \(O(N^{-1})\), and narrow networks are appreciably non-Gaussian at initialization \citep{wolinski_gaussian_2022}.
One solution, pursued in the monograph by \citet{roberts_principles_2022}, is to expand to second order, reducing the asymptotic error to \(O(N^{-2})\).

We focus on the activation function \(\sin(x)\).
The conventional route is taken by \citet{combette_new_2026}, who instantiate the fixed-point recursion (Step 3) as a transcendental equation involving the Lambert W function.
The second-order theory of \citet[\S5.3.3,\S9.3]{roberts_principles_2022} classifies \(\sin\) in the same universality class as \(\tanh\).
Altogether, previous work on stable-initialization theory for \(\sin(x)\) uses general properties of \(\sin(x)\), such as its moments, Taylor series, and range, but---crucially---has ignored its periodicity.

\paragraph{Input and output layers}
The first and last layers of a neural network deserve their own attention.
Whereas stable initialization theory is interested in the asymptotic \emph{slope} (namely, zero, for edge-of-chaos initialization) of the network's behavior with respect to \(L\) and \(N\),
the absolute scale of the input and output weights and biases determines the \emph{intercept}:
no matter how large \(L\) and \(N\) are, \(f(10x)\) is a very different function than \(f(x)\).
Sinusoidal neural networks are known to exhibit a fragile dependence on the weight scale in general \citep{yeom_fast_2025} and input weights in particular \citep{sitzmann_implicit_2020,combette_new_2026}.
Many works tune a hyperparameter \(\omega_0\) that scales the inputs by an isotropic base frequency.
(The Nyquist frequency can be motivated by sensor resolution \citep{yuce_structured_2022}, but it follows the sensor rather than the underlying field.)
\citet{alsakabi_ja-siren_2026} use a root-finding algorithm to match the initial network's spectrum to the discrete sine transform of the field, which is optimal in a sense, but requires rich information about the training data.
\citet{tancik_learned_2021} target the input frequency of a Fourier feature network as a meta-learning task.

\subsection{Summary of our method}
Conventional stable initialization theory models preactivations as distributed on \(\mathbb R^N\) and centered at 0,
which necessitates small or zero biases.
However, because sine is a periodic function, the preactivation may equivalently be viewed as wrapped on the torus \([0, 2\pi]^N\), which has a unique translation-invariant probability measure.
By sampling large biases, we spread the preactivations uniformly around the torus \emph{irrespective of the location and scale of the previous layer}, which corresponds to 
drawing biases from \(\mathcal N(0, \infty)\) in a conventional stable initialization (Appendix~\ref{app:cvp26-comparison}). 
\begin{definition}
	The \emph{uniform-phase} initialization for hidden layers
	\(x \mapsto \sin(Wx + b)\)
	is
	\begin{align*}
		W_{ij} & \sim \mathcal{N}(0, 2/N)    &  & \text{independently, and} \\
		b_i    & \sim \mathcal{U}([0, 2\pi]) &  & \text{independently.}
	\end{align*}
\end{definition}

As a consequence, miraculously, all of the entries of all of the matrices in \eqref{eq:jacobian-chain-rule} become not only jointly independent but also independent of the input (Lemma~\ref{lem:independent-factorization}).
The essence is summarized by the following fact:
\begin{quote}
	Let \(X\) have \emph{any} distribution on \(\mathbb R\), and let \(b \sim \mathcal U([0, 2\pi])\) be independent of \(X\).

	Then \(\cos(X + b)\) is independent of \(X\).
\end{quote}

We state and prove the neural network version (Lemma~\ref{lem:independent-factorization})
and then deploy it to prove our main result (Theorem~\ref{thm:general}) on the stability of hidden layers.
Our hidden-layer theory facilitates
an auxiliary result on initializing the input and output layers (Section~\ref{sec:input-output-layers}),
which lends interpretability to the input and output scales by relating them to non-asymptotic average-case regularity properties of the initial network.

Altogether, for every fixed input \(x\), our initialization distribution guarantees:
\begin{equation*}
	\expect J^\intercal J = \expect J J^\intercal = I,
\end{equation*}
simultaneously for all Jacobians \(J\) between \emph{any} two hidden layers, and
\begin{equation*}
	\expect f(x), \quad \Cov f(x), \quad \text{and} \quad \expect \nabla f(x) \del{\Cov f(x)}^{-1} (\nabla f(x))^\intercal
\end{equation*}
all attain prescribed values exactly.
They can be imposed as an inductive bias or moment-matched to data.

\subsection{Contributions}
We propose a simple and---to our knowledge, without precedent---exact solution to stable initialization at any width and depth that harnesses the periodicity of sinusoidal activation.
We also propose an exact moment-matching criterion for zero-shot tuning of the input and output layer initialization.
Our method is architecture-agnostic and adapts to the spatial resolution and output statistics of the neural representation task's target.
Untuned, it outperforms fine-tuned baselines for sinusoidal network initialization.

We validate the conventional wisdom on the effectiveness of stable initialization by comparing our initialization to stable initializations in the literature.
On most examples, our method meets or exceeds the bar set by previous work.
We also test the learning-rate scaling of the maximal-update parametrization (\(\mu\)P) for neural representations and question the necessity of its architecture-dependent last-layer scaling.

\subsection{Related work}
\paragraph{Sinusoidal activations and neural representation}
The sine function satisfies the hypotheses of universal approximation theory in neural networks; in fact, similar guarantees can be obtained using the older and more robust theory of Fourier analysis \citep{parascandolo_taming_2017}.
However, multilayer networks with sinusoidal activation have proved remarkably effective for the task of \textbf{neural representation}: learning real-world signals represented as functions of an input coordinate, e.g.~audio as a map from time to sound pressure, or an image as a map from spatial coordinates to pixel values \citep{sitzmann_implicit_2020,dupont_coin_2021,avidan_implicit_2022}.
A related idea (Fourier features) is to use sinusoids in the input layer, but with fixed isotropic weights \citep{tancik_fourier_2020,mildenhall_nerf_2020}.

\paragraph{Initializing sinusoidal networks}
\citet{sitzmann_implicit_2020} (henceforth SM20) attempt to impose zero mean and unit variance on the preactivation distribution but do not account for gradient blowup with network depth.
Although input-frequency tuning partly mitigates this weakness from an NTK perspective \citep{belbute-peres_simple_2023},
\citet{combette_new_2026} (henceforth CVP26) are the first to carry out the complete stable-initialization procedure to control both preactivations and gradients.
\citet{novello_tuning_2025,yeom_fast_2025} are explicitly interested in controlling the frequency power spectrum of the initialized network.
\citet{alsakabi_ja-siren_2026} deterministically match it to the training data.
\citet{ben-shabat_digs_2022} propose an initialization geared specifically toward signed distance functions.

\paragraph{Scaling, feature learning, and \(\mu\)P}
Maximal update parametrization (\(\mu\)P) is a theory for scaling hyperparameters with width \(N\) at a fixed depth (the main use case is to tune a model's hyperparameters at a small scale and subsequently transfer them to a much wider model) \citep{yang_tensor_2022,yang_spectral_2024,dey_dont_2026}.
But it only gives the slope and not the intercept; for sine-activated networks, the absolute scales depend on the task and have an outsized effect on training \citep{yeom_fast_2025}.
While the predictions of \(\mu\)P are asymptotic \(\Theta(\cdot)\)- and \(O(\cdot)\)-type results, our results are exact equalities for any width and depth.
\(\mu\)P's learning-rate scaling is orthogonal to our work, but we test it nevertheless on our initializations.
\(\mu\)P's initialization scaling agrees with our exact parameters in all but the final layer; we discuss this subtlety in Section~\ref{sec:mup}.
\section{Theory of the uniform-phase initialization for hidden layers}\label{sec:theory}
Specializing the hidden map \(h\) from Section~\ref{sec:introduction} to \(\sigma = \sin\), let \(x_0 = x\) and write the layerwise pre- and post-activations as
\begin{align}
	\label{eq:network-definition}
	z_\ell = W_\ell x_{\ell-1} + b_\ell, \qquad x_\ell = \sin(z_\ell), \qquad \ell \in [1\ldots L],
\end{align}
so that \(h(x) = x_L\).
Introduce the notation
\(\Cos x = \operatorname{diag}(\cos(x))\), where \(\cos\) is applied elementwise.
All expectations in this section are over the initialization distribution, with \(x_0\) arbitrary.

We now state and prove a few useful properties of the uniform-phase initialization.

\begin{lemma}[Independent Jacobian factorization]
	\label{lem:independent-factorization}
	In the uniform-phase initialization, the terms
	\(\{W_\ell\}_{\ell=1}^L \cup \{\cos z_\ell\}_{\ell=1}^L\) are independent.
\end{lemma}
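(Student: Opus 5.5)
The plan is to prove the stronger statement that, for every fixed input \(x_0\), the family \(\{W_\ell\}_{\ell=1}^L \cup \{z_\ell \bmod 2\pi\}_{\ell=1}^L\) is jointly independent, with each \(z_\ell \bmod 2\pi\) uniform on the torus \([0,2\pi)^N\). The lemma then follows, because \(\cos z_\ell\) is a measurable function of \(z_\ell \bmod 2\pi\), and functions of independent blocks remain independent. The engine is the one-dimensional fact quoted in the introduction, in a slightly stronger form. If \(b \sim \mathcal U([0,2\pi])\) is independent of a random element \(Y\), and \(X = \phi(Y)\) is real-valued, then \((X + b) \bmod 2\pi\) is uniform and independent of \(Y\). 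To check this, condition on \(Y = y\): the translate \((\phi(y) + b) \bmod 2\pi\) is uniform by translation invariance of Haar measure on the circle. Its conditional law therefore does not depend on \(y\), which is exactly independence from \(Y\).

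I would argue by induction on \(\ell\). The hypothesis at stage \(\ell-1\) is that \(W_1,\dots,W_{\ell-1}, u_1,\dots,u_{\ell-1}\) are jointly independent, where \(u_k = z_k \bmod 2\pi\). Stage \(\ell\) adds \(W_\ell\) and \(u_\ell\).
\begin{enumerate}
\item Let \(\mathcal F_{\ell-1}\) be the \(\sigma\)-algebra generated by \(W_1,\dots,W_{\ell-1}, b_1,\dots,b_{\ell-1}\).
\item The post-activation \(x_{\ell-1} = \sin(z_{\ell-1})\) depends only on \(u_{\ell-1}\), so it is \(\mathcal F_{\ell-1}\)-measurable. (When \(\ell=1\), \(x_0\) is deterministic.)
\item The pair \((W_\ell, b_\ell)\) is independent of \(\mathcal F_{\ell-1}\), and \(W_\ell\) is independent of \(b_\ell\).
\item Take \(Y = (\text{all of } \mathcal F_{\ell-1}, W_\ell)\). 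Then \(z_\ell = W_\ell x_{\ell-1} + b_\ell\) has \(i\)-th coordinate \(\phi_i(Y) + b_{\ell,i}\), where the \(b_{\ell,i}\) are i.i.d.\ uniform and independent of \(Y\).
\item Apply the one-dimensional fact coordinatewise, conditioning on \(Y\). Given \(Y = y\), the vector \(u_\ell\) has independent uniform coordinates, so its conditional law is Haar measure on the torus whatever \(y\) is. Hence \(u_\ell\) is uniform and independent of \(Y\).
\item It follows that \(u_\ell\) is independent of \((W_1,\dots,W_\ell,u_1,\dots,u_{\ell-1})\), since these are all functions of \(Y\).
\item Within that tuple, \(W_\ell\) is independent of the earlier variables, which are functions of \(\mathcal F_{\ell-1}\). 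The earlier variables are mutually independent by the induction hypothesis.
\end{enumerate}
Combining steps 6 and 7 through the chain rule for joint independence (if \(A\perp B\) and \(C \perp (A,B)\), then mutual independence of \(A\)'s and \(B\)'s components extends with \(C\)) gives joint independence at stage \(\ell\).

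The main obstacle is bookkeeping rather than analysis. The subtle point is that \(u_\ell\) must be shown independent of \(W_\ell\) itself, even though \(W_\ell\) enters \(z_\ell\). This works only because the conditioning in step 5 is on \(Y\), which includes \(W_\ell\), and the conditional law of \(u_\ell\) given \(Y\) turns out to be constant. I would state this conditional-law argument carefully, for example via regular conditional distributions or simply Fubini on the product measure, to handle the continuous variables. A final remark in the proof should note that nothing depended on the distribution of \(W_\ell\) or on \(x_0\). The Gaussian scale \(2/N\) is irrelevant for this lemma; it matters only later, for the Jacobian moment computations.
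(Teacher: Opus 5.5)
Your proof is correct and is essentially the paper's argument: the paper likewise conditions on all randomness except \(b_L\) (your \(Y\)), uses the uniformity of the wrapped preactivation to make the inner expectation of an indicator of \(\cos z_L\) a constant, factors out \(W_L\), and inducts on depth. The differences are cosmetic: the paper peels off the last layer with indicator test functions, whereas you build forward. Your version also records that each \(z_\ell \bmod 2\pi\) is uniform on the torus, which is the fact the paper uses implicitly when it applies Lemma~\ref{lem:sinusoid-moments} to \(z_t\) in Theorem~\ref{thm:general}.
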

\begin{proof}
	Let \(\phi_1, \ldots, \phi_{L}: \mathbb R^{N} \to \mathbb R\) and \(\psi_1, \ldots, \psi_L: \mathbb R^{N\times N} \to \mathbb R\) be arbitrary indicator functions of measurable sets.
	\begin{align}
		\MoveEqLeft
		\expect \prod_{\ell=1}^{L} \phi_\ell(\cos(z_\ell)) \psi_\ell(W_\ell)
		\notag
		\\
		 & = \expect \phi_L(\cos(W_L x_{L-1} + b_L)) \psi_L(W_L) \prod_{\ell=1}^{L-1} \phi_\ell(\cos(W_\ell x_{\ell-1} + b_\ell)) \psi_\ell(W_\ell)
		\notag
		\\
		 & = \expect
		\underbrace{
			\expect_{b_L} \sbr{\phi_L(\cos(W_L x_{L-1} + b_L))}
		}_{= \text{constant}}
		\psi_L(W_L)
		\prod_{\ell=1}^{L-1} \phi_\ell(\cos(W_\ell x_{\ell-1} + b_\ell)) \psi_\ell(W_\ell)
		\tag{conditioning on all randomness other than \(b_L\)}
		\\
		 & =
		\expect \sbr{\phi_L(\cos(W_L x_{L-1} + b_L))}
		\expect
		\psi_L(W_L)
		\prod_{\ell=1}^{L-1} \phi_\ell(\cos(W_\ell x_{\ell-1} + b_\ell)) \psi_\ell(W_\ell)
		\tag{\(\star\)}
		\\
		 & =
		\expect \sbr{\phi_L(\cos(W_L x_{L-1} + b_L))}
		\expect
		\psi_L(W_L)
		\expect
		\prod_{\ell=1}^{L-1} \phi_\ell(\cos(W_\ell x_{\ell-1} + b_\ell)) \psi_\ell(W_\ell)
		\tag{\(W_L\) is independent}
		\\
		 & =
		\prod_{\ell=1}^{L} \expect  \phi_\ell(\cos(z_\ell)) \expect \psi_\ell(W_\ell)
		\tag{induction on \(L\)}
	\end{align}
	The crucial step, (\(\star\)), uses the fact that \(W_L x_{L-1} + b_L\) is uniformly distributed modulo \(2\pi\) for any fixed \(W_L x_{L-1}\).
	Averaging over \(b_L\) therefore yields a constant, which can be pulled out of the expectation.
\end{proof}

\begin{lemma}[Moments of uniform-phase sinusoids]
	\label{lem:sinusoid-moments}
	Let \(b\) be uniformly distributed on the torus \([0, 2\pi]^N\). Then
	\(\expect \cos(b) = 0\) and \(\expect \sbr{\cos(b)\cos(b)^\intercal} = \frac12 I\).
\end{lemma}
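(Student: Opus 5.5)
The plan is a direct coordinatewise computation. Uniform measure on the torus \([0, 2\pi]^N\) is the product of the uniform measures on each circle, so the coordinates \(b_1, \ldots, b_N\) are independent and each is distributed as \(\mathcal U([0, 2\pi])\). Both claims therefore reduce to one-dimensional integrals, together with independence for the off-diagonal entries.

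For the mean, I would compute \(\expect \cos(b_i) = \frac{1}{2\pi}\int_0^{2\pi} \cos t \, dt = \frac{1}{2\pi}\sbr{\sin t}_0^{2\pi} = 0\) for each \(i\). This gives \(\expect \cos(b) = 0\).

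For the second moment I would split into two cases.
\begin{itemize}
	\item \emph{Diagonal entries.} Use \(\cos^2 t = \tfrac12(1 + \cos 2t)\). Then \(\expect \cos^2(b_i) = \tfrac12 + \tfrac{1}{4\pi}\int_0^{2\pi} \cos 2t\,dt = \tfrac12\), since \(\cos 2t\) integrates to zero over two full periods.
	\item \emph{Off-diagonal entries} (\(i \neq j\)). Independence of \(b_i\) and \(b_j\) gives \(\expect \cos(b_i)\cos(b_j) = \expect\cos(b_i)\,\expect\cos(b_j) = 0\), using the mean computation.
\end{itemize}
Assembling the entries gives \(\expect\sbr{\cos(b)\cos(b)^\intercal} = \tfrac12 I\).

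There is no real obstacle here. The only point needing care is that the product structure of the torus measure supplies the independence used in the off-diagonal step; everything else is elementary integration.

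In later use, this lemma will presumably be applied to \(\cos z_\ell\) rather than to \(\cos b\). That substitution is justified by the crucial step \((\star\)) in the proof of Lemma~\ref{lem:independent-factorization}. There, \(z_\ell = W_\ell x_{\ell-1} + b_\ell\) is uniform modulo \(2\pi\) coordinatewise, with independent coordinates, conditional on everything else. Hence \(\cos z_\ell\) has the same law as \(\cos b\).
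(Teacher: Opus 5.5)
Your proposal is correct and follows essentially the same route as the paper: the mean vanishes in each coordinate, the off-diagonal entries vanish by independence of the coordinates, and each diagonal entry equals $\frac12$. The only difference is cosmetic: the paper obtains the diagonal value from circular symmetry ($\expect \cos^2 b_1 = \expect \sin^2 b_1$) together with the Pythagorean identity, whereas you integrate the double-angle formula directly.
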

\begin{proof}
	By circular symmetry in each coordinate, \(\expect \cos(b) = 0\).
	Because the coordinates are independent, the off-diagonal entries of \(\expect \sbr{\cos(b)\cos(b)^\intercal}\) are zero.
	On the diagonal, \(\expect \cos^2 b_1 = \expect \sin^2 b_1\) by circular symmetry, while \(\expect \cos^2 b_1 + \expect \sin^2 b_1 = 1\) by the Pythagorean identity; therefore \(\expect \cos^2 b_1 = \frac12\).
\end{proof}

Combining these inductively proves our main result on hidden layers.

\begin{theorem}\label{thm:general}
	Let \(0 \leq s \leq t \leq L\) be integers.
	Then for all inputs \(x_0\), the hidden layers \eqref{eq:network-definition} satisfy
	\begin{align}
		\expect \del{\dpd{x_t}{x_s}}
		\del{\dpd{x_t}{x_s}}^\intercal
		 & = I,
		 &
		\expect \del{\dpd{x_t}{x_s}}^\intercal
		\del{\dpd{x_t}{x_s}}
		 & = I.
	\end{align}
\end{theorem}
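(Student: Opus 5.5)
By the chain rule, \(\dpd{x_t}{x_s} = D_t W_t D_{t-1} W_{t-1} \cdots D_{s+1} W_{s+1}\), where \(D_\ell = \Cos z_\ell\). When \(s = t\) the product is empty, the Jacobian is \(I\), and there is nothing to prove. So I fix \(s < t\) and write \(J_{s,t}\) for this product. I will argue by induction on \(t - s\), peeling off one factor pair \(D_t W_t\) at a time. The two ingredients are Lemma~\ref{lem:independent-factorization}, which makes every \(D_\ell\) and every \(W_\ell\) mutually independent and lets expectations factor, and Lemma~\ref{lem:sinusoid-moments}, which supplies \(\expect D_\ell^2 = \frac12 I\).

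One preliminary needs care. Lemma~\ref{lem:sinusoid-moments} is stated for a phase that is itself uniform on the torus, whereas \(z_\ell = W_\ell x_{\ell-1} + b_\ell\). By the same conditioning argument as in step (\(\star\)) of Lemma~\ref{lem:independent-factorization}, conditional on everything except \(b_\ell\), the vector \(z_\ell\) is uniform modulo \(2\pi\) on \([0,2\pi]^N\). Hence \(\cos z_\ell\) has the same distribution as \(\cos b\) with \(b\) uniform, and Lemma~\ref{lem:sinusoid-moments} gives \(\expect D_\ell^2 = \frac12 I\).

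For the second identity, \(J_{s,t}^\intercal J_{s,t} = J_{s,t-1}^\intercal W_t^\intercal D_t^2 W_t J_{s,t-1}\). The factors \(J_{s,t-1}\), \(W_t\) and \(D_t\) are functions of disjoint sub-collections of the independent family in Lemma~\ref{lem:independent-factorization}, so they are mutually independent. I will condition on \(J_{s,t-1}\) and take the inner expectation first:
\begin{equation*}
\expect W_t^\intercal D_t^2 W_t = \expect_{W_t}\bigl[W_t^\intercal (\expect D_t^2) W_t\bigr] = \tfrac12 \expect W_t^\intercal W_t = \tfrac12 \cdot N \cdot \tfrac{2}{N} I = I.
\end{equation*}
This holds because \((W^\intercal W)_{jk} = \sum_i W_{ij}W_{ik}\), which has expectation \(N \cdot \frac{2}{N}\delta_{jk}\). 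The whole expression then reduces to \(\expect J_{s,t-1}^\intercal J_{s,t-1}\), which equals \(I\) by induction.

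For the first identity, I peel from the left instead of the right: \(J_{s,t} = J_{s+1,t}\, D_{s+1} W_{s+1}\). Then \(J_{s,t} J_{s,t}^\intercal = J_{s+1,t}\, D_{s+1} W_{s+1} W_{s+1}^\intercal D_{s+1}\, J_{s+1,t}^\intercal\). Using independence, \(\expect W_{s+1}W_{s+1}^\intercal = 2I\), so the middle factor has expectation \(2\,\expect D_{s+1}^2 = I\). Induction again closes the argument. One could instead peel \(D_t W_t\) from the left; this gives \(D_t W_t J J^\intercal W_t^\intercal D_t\), and the same independence yields \(\expect[W_t M W_t^\intercal] = \frac{2}{N}\operatorname{tr}(M)\, I\) with \(M = \expect J_{s,t-1} J_{s,t-1}^\intercal\). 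That route also works once \(M = I\) is known, but the right-hand peel is cleaner.

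The main obstacle is justifying that expectations factor through products of matrices. Lemma~\ref{lem:independent-factorization} gives independence of the whole family, so it suffices that each entry of the product is a polynomial in independent random variables with finite moments: Gaussian entries and bounded cosines. Linearity of expectation applied to products of independent variables then justifies every step.
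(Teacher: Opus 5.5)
Your proof is correct and follows the paper's argument: chain-rule factorization, mutual independence from Lemma~\ref{lem:independent-factorization}, the moments $\expect W_\ell W_\ell^\intercal = \expect W_\ell^\intercal W_\ell = 2I$ and $\expect D_\ell^2 = \tfrac12 I$, and induction that peels off one layer at a time. The only difference is the pairing of recursions with identities: you use the forward recursion for $J^\intercal J$ and the backward one for $JJ^\intercal$, so the fresh layer sits inside the sandwich and the inductive hypothesis outside, whereas the paper does the reverse (your ``alternative'' left peel for $JJ^\intercal$ is exactly the paper's proof); your explicit check that $z_\ell$ is uniform modulo $2\pi$ before invoking Lemma~\ref{lem:sinusoid-moments} also makes precise a step the paper leaves implicit.
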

Write \(\delta_s^t \coloneqq \dpd{x_t}{x_s}\).
By the chain rule, \(\delta_s^t\) satisfies both a forward recursion in \(t\) (with \(s\) fixed) and a backward recursion in \(s\) (with \(t\) fixed):
\begin{subequations}
	\begin{align}
		\delta_s^t & = \Cos z_t W_t \delta_s^{t - 1},       & \delta_s^s & = I,
		\label{eq:general-forward}
		\\
		\delta_s^t & = \delta_{s+1}^t \Cos z_{s+1} W_{s+1}, & \delta_t^t & = I.
		\label{eq:general-backward}
	\end{align}
\end{subequations}
Note that \(\delta_s^t\) depends only on \(x_s\), \(\{W_\ell\}_{\ell=s+1}^t\), and \(\{b_\ell\}_{\ell=s+1}^t\); we shall use this fact to argue for independence in later steps.
\begin{proof}[Proof that \(\expect \delta_s^t (\delta_s^t)^\intercal = I\)]
	We prove this by induction on \(t\), with \(s\) fixed.
	The base case \(t = s\) is immediate from \eqref{eq:general-forward}.
	For the inductive step, we assume that \(\expect \delta_s^{t-1} (\delta_s^{t-1})^\intercal = I\) and want to show that \(\expect \delta_s^t (\delta_s^t)^\intercal = I\).
	By \eqref{eq:general-forward},
	\begin{align}
		\delta_s^t (\delta_s^t)^\intercal
		 & = \Cos z_t W_t \delta_s^{t-1} (\delta_s^{t-1})^\intercal W_t^\intercal \Cos z_t.
		\intertext{According to Lemma~\ref{lem:independent-factorization}, \(\delta_s^{t-1}\), \(W_t\), and \(\Cos z_t\) are mutually independent.
			Therefore, we may evaluate this expectation from the inside out by repeatedly conditioning on the outer terms:}
		\expect \delta_s^t (\delta_s^t)^\intercal
		 & = \expect \Cos z_t W_t \del{
			\expect \delta_s^{t-1} (\delta_s^{t-1})^\intercal
		} W_t^\intercal \Cos z_t
		\\
		 & = \expect \Cos z_t W_t W_t^\intercal \Cos z_t
		\\
		 & = \expect \Cos z_t \del{\expect W_t W_t^\intercal }\Cos z_t
		\\
		 & = 2 \expect \Cos z_t\Cos z_t
		\\
		 & = I
	\end{align}
	by Lemma~\ref{lem:sinusoid-moments}.
\end{proof}
\begin{proof}[Proof that \(\expect (\delta_s^t)^\intercal \delta_s^t = I\)]
	We prove this by downward induction on \(s\), with \(t\) fixed.
	The base case \(s = t\) is immediate from \eqref{eq:general-backward}.
	For the inductive step, we assume that
	\(\expect (\delta_{s+1}^t)^\intercal \delta_{s+1}^t = I\) and want to show that \(\expect (\delta_s^t)^\intercal \delta_s^t = I\).
	By \eqref{eq:general-backward},
	\begin{align*}
		(\delta_s^t)^\intercal \delta_s^t
		 & = W_{s+1}^\intercal \Cos z_{s+1} (\delta_{s+1}^t)^\intercal \delta_{s+1}^t \Cos z_{s+1} W_{s+1}.
		\intertext{Using the same conditioning tactic as above, }
		\expect (\delta_s^t)^\intercal \delta_s^t
		 & = \expect W_{s+1}^\intercal \Cos z_{s+1} \del{
			\expect (\delta_{s+1}^t)^\intercal \delta_{s+1}^t
		} \Cos z_{s+1} W_{s+1}
		\\
		 & = \expect W_{s+1}^\intercal \Cos z_{s+1} \Cos z_{s+1} W_{s+1}
		\\
		 & = \expect W_{s+1}^\intercal \del{\expect \Cos z_{s+1} \Cos z_{s+1}} W_{s+1}
		\\
		 & = \frac12 \expect W_{s+1}^\intercal W_{s+1}
		= I.
	\end{align*}
\end{proof}

This theorem guarantees that the Jacobian (or gradient) between any two layers is isotropic, and its average squared singular value is 1.
In the backpropagation direction, early training loss gradients neither grow nor decay on average as they propagate backward through the network.
In the forward direction, the network, as a mathematical function that maps inputs to outputs, possesses zeroth-order (range) and first-order (slope) properties that are independent of width and depth.

\section{Moment matching for input and output layers}\label{sec:input-output-layers}
Thus far, we have dealt only with \(h\), the network's hidden-layer map, which has \(N\) inputs, \(N\) outputs, and depth \(L\).
To complete the network \(f = g_\text{out} \circ h \circ g_\text{in}\) from \(\mathbb R^{N_\text{in}}\) to \(\mathbb R^{N_\text{out}}\) as defined in the introduction, we must initialize the input layer \(g_\text{in}\) and the output layer \(g_\text{out}\).
We draw \(b_\text{in}\) from a uniform distribution on \([0, 2\pi]\), as we do for the hidden-layer biases.
This establishes translation invariance and eliminates the need to center the data.

For target moments \(\mu\), \(\Sigma \succ 0\), and \(\Omega \succeq 0\), we initialize the remaining three parameters \(W_\text{in} \in \mathbb R^{N\times N_\text{in}}\), \(W_\text{out} \in \mathbb R^{N_\text{out}\times N}\), and \(b_\text{out} \in \mathbb R^{N_\text{out}}\) by moment matching:
\begin{subequations}
	\begin{align}
		\mu    & = \expect f(x) = \expect b_\text{out},                                                                                         &  & \text{the output mean,}
		\\
		\Sigma & = \Cov f(x) = \tfrac{1}{2}\expect W_\text{out} W_\text{out}^\intercal,                                                         &  & \text{the output covariance, and}
		\\
		\Omega & = \expect \dpd{f(x)}{x}^\intercal \Sigma^{-1} \dpd{f(x)}{x} = \frac{N_\text{out}}{N}\expect W_\text{in}^\intercal W_\text{in}, &  & \text{the normalized structure tensor.}
	\end{align}
\end{subequations}
Because the uniform-phase initialization is translation invariant, these moments may be interpreted as being evaluated at any particular \(x\) or over a representative distribution of \(x\).
After \(\mu\), \(\Sigma\), and \(\Omega\) are either prescribed or estimated from data, we match these target moments by setting
\begin{subequations}
	\begin{align}
		b_\text{out}   & = \hat \mu,                                                                  \\
		W_{\text{out}} & \sim \mathcal N(0, 2\hat \Sigma / N) \text{ columns},        &  & \text{and} \\
		W_{\text{in}}  & \sim \mathcal N(0, \hat \Omega / N_\text{out}) \text{ rows}.
	\end{align}
\end{subequations}
The key idea is to use \(\Omega\) to calibrate the input scale in the ``physical'' (that is, nonfrequency) domain.
By initializing \(W_\text{out}\) and \(W_\text{in}\) in this way, we nondimensionalize both the inputs and the outputs for the hidden section of the neural network.

In prior work \citep{sitzmann_implicit_2020,combette_new_2026}, the input weights are scaled using a ``characteristic frequency'' \(\omega_0\): the Nyquist frequency (in audio), \(\sim 30\) periods (in images), \(\sim 2\) periods (in physics-informed neural networks).
However, this and other scales have an outsized effect on learning and require tuning \citep{belbute-peres_simple_2023,yeom_fast_2025}.
By contrast, matching the observed value of \(\Omega\) allows the network to adapt to the regularity characteristics of the data.

\section{Numerical experiments}
Stable initialization is a mathematical definition accompanied by a scientific hypothesis:
a stably initialized network will perform better than an unstably initialized network after thousands of epochs under the same training operator.
Our construction, which exactly calibrates initialization-ensemble second moments at any depth and width, establishes the ideal laboratory conditions in which to test this hypothesis.
Appendix~\ref{app:experimental-details} gives full details of the tasks, preprocessing, model implementations, optimization, metrics, and hyperparameter sweeps.

\subsection{Image fitting}
\label{sec:image-fitting}
We fit the Cameraman image on a $128 \times 128$ training grid and evaluate it on a $512 \times 512$ grid to assess generalization.
We compare the following architectures (details in Appendix~\ref{app:experimental-details}):
\begin{description}
	\item[SIREN (UniformPhase-MM)] Our proposed initialization, which uses uniform-phase biases and moment-matched input and output layers, estimates $\mu$, $\Sigma$, and $\Omega$ from the training data, and requires no hand-tuned $\omega_0$.
	\item[SIREN (CVP26 $\sigma_a\!=\!0$)] The edge-of-chaos initialization of \citet{combette_new_2026}.
	\item[SIREN (SM20)] The original initialization of \citet{sitzmann_implicit_2020}.
	\item[Tanh (PE-Xavier)] A Tanh MLP with random Fourier-feature encoding \citep{tancik_fourier_2020} and Xavier weights.
	\item[ReLU (Kaiming), SiLU (Xavier), GeLU (Xavier)] Standard MLPs with the corresponding activation and initialization.
\end{description}

\begin{figure}[t]
	\centering
	\includegraphics[width=\textwidth]{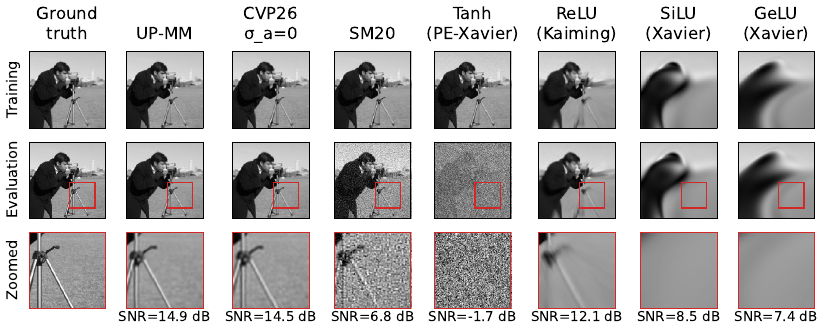}
	\caption{Image fitting on the Cameraman image. Columns: the ground truth followed by one column per initialization, with the evaluation SNR beneath each. Top row: $128 \times 128$ training image. Middle row: $512 \times 512$ test image. Bottom row: zoomed region (red square in the middle row).}
	\label{fig:cameraman}
\end{figure}

The uniform-phase, moment-matched initialization achieves the lowest evaluation MSE by a small margin (Table~\ref{tab:cameraman} in Appendix~\ref{app:image-fitting-results}), followed by CVP26 (\(\sigma_a = 0\)) and the remaining replications of \citet{combette_new_2026}.
Visually, the top two methods both preserve fine details without introducing excessive noise
(\figref{fig:cameraman}).

\subsection{Audio fitting}
We use the same architectures in a replication of the Bach audio experiment of \citet[Appendix 8]{sitzmann_implicit_2020}
\footnote{
	The signal is a $7$\,s mono audio signal at $44.1$\,kHz.
	The time coordinate is scaled to $[-100, 100]$, the audio is normalized by $\max|y|$, and the network is trained and evaluated on the \emph{full} signal (no held-out test split).
	All networks have $L = 5$ hidden layers of width $N = 256$ and are trained for $5000$ epochs with Adam at a learning rate of $5 \times 10^{-5}$.
	The Tanh network uses random Fourier features with $\sigma = 10$.
}.
For the CVP26 and SM20 SIRENs, we replicate the input frequency $\omega_0 = 30$ used by \citet{sitzmann_implicit_2020}.
For our SIREN, we use the moment-matched initialization, which estimates $\hat\Omega$ from the data and requires no hand-tuned frequency.
The SM20 SIREN attains the highest fidelity, with an SNR of $31.15$~dB, but
our moment-matched SIREN reaches $29.33$~dB \emph{without} any hand-tuned frequency parameter; its input scale is automatically calibrated through $\hat\Omega$.
Our SIREN's SNR exceeds CVP26's by nearly 20~dB, and the non-sinusoidal networks (ReLU, SiLU, GeLU) collapse to near-zero output (\figref{fig:audio} in Appendix~\ref{app:image-fitting-results}).

\subsection{Narrow networks}
We repeat the experiment of \S\ref{sec:image-fitting} but reduce the width to \(N = 16\) to stress the infinite-width assumption underlying the CVP26 and SM20 initializations.
The differences between initializations are amplified as a result.
Our moment-matched SIREN remains the best performer across all metrics (SNR: 10.28~dB), and the gap to CVP26 widens to 6.8~dB (\figref{fig:cameraman-narrow} and Table~\ref{tab:cameraman-narrow} in Appendix~\ref{app:image-fitting-results}).

\subsection{Moment matching (zero-shot) versus \texorpdfstring{\(\omega_0\) tuning}{omega0 tuning} (hyperparameter search)}
The hyperparameterizations of CVP26 and SM20 call for an isotropic ``characteristic frequency'' \(\omega_0\) to scale the inputs.
Prior work suggests \(\omega_0 = 64\pi/\sqrt{6}\) for a domain of \([-1, 1]\).
By contrast, our method achieves zero-shot calibration to the anisotropic statistics of the image field; it is invariant to domain translation and equivariant under domain scaling, sampling resolution, and output scaling.
We compare our method's zero-shot tuning to a detailed grid search for prior methods' \(\omega_0\).

Using the same architecture and training as \S\ref{sec:image-fitting}, we evaluate on four \textbf{grayscale} images from \texttt{scikit-image} (Brick, Camera, Gravel, and Clock) and four color images (Hubble Deep Field, Skin, Astronaut, and Coffee), in both \textbf{RGB} and \textbf{CIELAB} color spaces.
For each baseline SIREN initialization (SM20, CVP26 \(\sigma_a\!=\!0\), CVP26 \(\sigma_a\!=\!1\)), we sweep the input-layer frequency \(\omega_0\) over four powers of 10 on a logarithmic grid of 21 values centered at prior work's \(\omega_0\).
The estimated moment-matched parameters and per-image best test errors are reported in Appendix~\ref{app:omega-sweep}, and the best-\(\omega_0\) images in the Supplementary Material, Appendix~\ref{app:best-omega-figures}.

\begin{figure}[t]
	\centering
	\includegraphics[width=\textwidth]{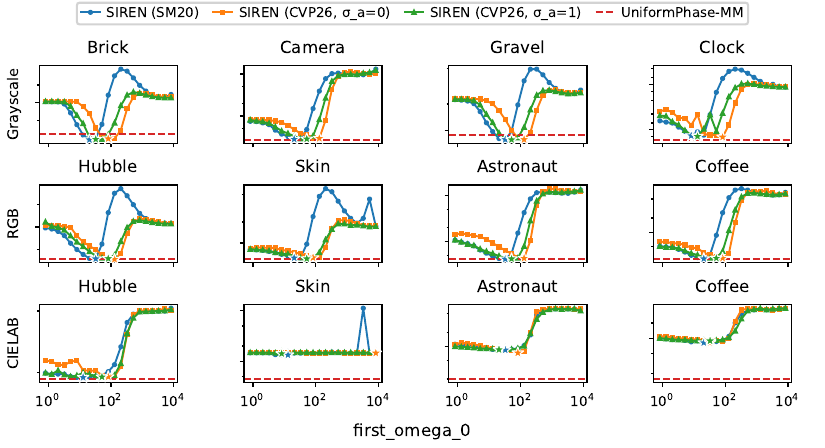}
	\caption{Grayscale and color \(\omega_0\) sweep. Test MSE versus input-layer frequency \(\omega_0\); rows are the grayscale, RGB, and CIELAB experiments, and columns are the four images in each. A star marks the minimum of every sweep, and the dashed red line marks the uniform-phase MM test MSE. Each panel has its own logarithmic vertical scale; for absolute errors, see Tables~\ref{tab:omega-sweep-gray}, \ref{tab:omega-sweep-color}, and \ref{tab:omega-sweep-lab}.}
	\label{fig:omega-sweep-all}
\end{figure}

Across the \textbf{grayscale} and \textbf{RGB} experiments, the moment-matched initialization achieves the lowest test MSE on 5 of 8 images (Camera, Clock, Skin, Astronaut, and Coffee).
On the remaining 3 images (Brick, Gravel, and Hubble), it is competitive, losing to the best-tuned baseline by 19\%, 12\%, and 1.5\%, respectively.
(On Brick and Gravel, we attribute the high error to an overemphasis on high frequencies; see \figref{fig:best-grayscale}.)
Even in these cases, the moment-matched initialization is no worse than mistuning \(\omega_0\) by \(20\%\).

The baselines are all capable of achieving similar performance, but at \(\omega_0\) values that vary widely across images.
For any given method and image, the loss landscape of \(\omega_0\) is nonconvex and contains local minima.
The general trend is that SM20 performs best at the smallest \(\omega_0\), followed by CVP26 (\(\sigma_a=1)\), followed by CVP26 (\(\sigma_a=0\)).
This ordering confirms an established finding \citep{combette_new_2026} that SM20 is biased toward high frequencies and large gradients, followed by CVP26 (\(\sigma_a=1\)) and CVP26 (\(\sigma_a=0\)), in that order.

The situation differs when we fit in three-dimensional \textbf{CIELAB} color space, which is related to RGB space by a nonlinear transformation that approximates human perception.
Perceived colors in CIELAB space are represented by \(L^* \in [0, 100]\) and \(a^*, b^* \in [-128, 127]\).
This transformation preserves the spatial structure of the images but drastically alters the mean and (anisotropic) covariance of the color channels.

For all CIELAB images, our initialization achieves the best fit; the improvement is slight on Hubble and ranges from a factor of 3.0 to 5.2 on the other images.
We attribute this to the fact that it correctly adapts to the distribution of the targets as well as to the typical resolution of spatial detail: the \(\hat\Omega\) structure tensors of CIELAB images are very close to those of RGB images (Appendix~\ref{app:omega-sweep}).

\subsection{Width scaling using \texorpdfstring{\(\mu\)P}{muP}: initialization and learning rates}
\label{sec:mup}
In this section, we test how the optimal learning rate depends on width \(N\).
The universality theory of \(\mu\)P is based on three desiderata \citep[Appendix J.2]{yang_tensor_2022}:
\begin{enumerate}
	\item Every (pre)activation vector in a network should have $\Theta(1)$-sized coordinates. \label{item:mup-activations}
	\item Neural network output should be $O(1)$. \label{item:mup-output}
	\item All parameters should be updated as much as possible (in terms of scaling in width) without leading to divergence. \label{item:mup-updates}
\end{enumerate}
Desideratum~\ref{item:mup-activations} is satisfied by the uniform-phase component of our initialization.
Desideratum~\ref{item:mup-output} is satisfied as \(\Theta(1)\) by the moment-matching property of our initialization; this differs from \(\mu\)P's own last-layer prescription (Remark~\ref{rem:mup-last-layer}).
Desideratum~\ref{item:mup-updates} concerns the learning rate, which is orthogonal to the focus of this paper;
for the Adam optimizer, which we use, \citet{yang_tensor_2022} prescribe scaling the learning rate as \(1/N\) for non-output layers.
We test whether this prescription transfers the optimal learning rate across width for sinusoidal networks, and whether it interacts with our moment-matched initialization.
(To our knowledge, no prior work has validated \(\mu\)P for either sinusoidal MLPs or the neural representation task.)

We repeat the experiment of \secref{sec:image-fitting} for each of the four initializations at seven widths \(N \in \{8, 16, 32, 64, 128, 256, 512\}\), using each initialization's best \(\omega_0\) from a width-\(64\) sweep.
At each width, we sweep the Adam learning rate over powers of two, \(\eta \in \{2^{-18}, \dots, 2^{2}\}\).
\begin{description}
	\item[standard] Plain Adam at the swept \(\eta\) with the initialization left unmodified.
	\item[\(\mu\)P (LR)] The \(\mu\)P learning-rate scaling above; the initialization is unmodified.
	\item[\(\mu\)P (LR+LL)] The same learning-rate scaling, with the last-layer weights also multiplied at initialization by \(\sqrt{64/N}\).
\end{description}

Under standard scaling, the loss curve shifts left with width, and under both \(\mu\)P-informed scalings, the loss curve as a function of learning rate deepens with increasing width.
These observations validate \(\mu\)P's predictions.
But the dominant pattern is the presence of a phase transition immediately to the right of the optimal learning rate, where the loss dramatically increases to a null value (Figure~\ref{fig:mup-lr-train}).
In both \(\mu\)P scalings, the phase transition and the optimal learning rate both shift right with width.
While we have not observed optimal hyperparameter transfer, both \(\mu\)Ps result in a loss that decreases with increasing width at every learning rate.

\(\mu\)P's last-layer rescaling in \textbf{LR+LL} does not appreciably differ from the \textbf{LR}-only scaling.
This supports our theory that, as far as initialization is concerned, moment matching, which is architecture-agnostic, is as effective for learning-rate transfer as \(\mu\)P, which is architecture-aware.

\section{Conclusion}
Stable initialization of sinusoidal networks rests on two pillars: hidden layer stability and input-output layer tuning.
Whereas the conventional theory for hidden layers requires subtle recursions and infinite-width analysis in order to control dependencies between neurons and weights,
our method renders all preactivations independent and identically distributed, allowing network gradient moments to be obtained exactly by a short computation.
We simplify the theory and give the strongest possible guarantee, one that holds for all widths and depths.

Whereas the conventional theory for input-output tuning relies on hand-tuning in the frequency domain, we take a physical-domain perspective on nondimensionalization: canceling the input and output scales using data-driven estimates of the neural field's covariance and structure tensor.
The result is zero-shot initialization tuning on par with a grid search over prior hyperparameterizations.

Our work suggests that sine may be underappreciated as an activation function in machine learning, and that the uniform-phase property may prove fruitful in diverse applications beyond neural representation.

\clearpage
\subsection*{AI use statement}
(Required disclosure) \textbf{In this work, we used AI tools} to
implement methods (replicate CVP26 and SM20, originally written in PyTorch, in JAX; implement our proposed initialization and evaluation harness).
\textbf{We did not use AI tools} to
generate synthetic data sets, help develop theoretical models or conceptual frameworks, formulate mathematical claims, provide critical ingredients for proving mathematical claims, assist in the writing of proofs, propose or refine hypotheses, design or provide feedback on research  methodology or experiments, assist with translation, clean and reformat dataset, support qualitative and thematic data analysis,
or interpret results.

(Recommended disclosure)
\textbf{Additionally, we used AI tools} to summarize or analyse existing literature, discover research topics or identify gaps, search for information, copyedit for grammaticality, and identify relevant literature.

\textbf{We have reviewed all AI-assisted work.}
We carefully proofread AI-generated code for correctness to rule out bugs and reward-hacking tendencies.
Furthermore, we have verified that our replications are within numerical tolerance of prior work.
In the main body of the paper, our use of AI in writing assistance is strictly limited to technical copyediting; wording and stylistic choices are the product of human authorship.
The problem description and reproducibility appendices are drafted by AI and checked for accuracy.
We prepared the bibliography manually using a reference manager and have verified the relevance of each citation.
\textbf{We take responsibility for the final content of this work,
	including text, claims or artifacts produced with the aid of generative AI.}

\subsection*{Ethics statement}
We consider that this work does not raise questions respecting the Code of Ethics.

\subsection*{Reproducibility statement}
Appendix~\ref{app:experimental-details} specifies the data, preprocessing, estimators, initialization distributions, architectures, optimization, random seeds, metrics, and sweep grids used in every numerical experiment; the remaining appendices provide complete proofs and additional quantitative and visual results. The accompanying source archive includes the shared JAX/Equinox implementations in \nolinkurl{src/architecture.py} and \nolinkurl{src/moment_matching.py}; \nolinkurl{src/cameraman-and-audio.ipynb} reproduces the image, narrow-network, and audio experiments; \nolinkurl{src/omega-sweep.ipynb} and \nolinkurl{src/mup-lr-sweep.ipynb} reproduce the two hyperparameter studies; \nolinkurl{src/jacobian_svd_spectrum (CVP26 comparison).ipynb} reproduces the spectral diagnostic; and \nolinkurl{src/image_fitting-repro.ipynb} records the CVP26 image-fitting replication used to validate the shared implementation. The image data are obtained through \texttt{scikit-image}, and the Bach waveform is included at \nolinkurl{src/assets/gt_bach.wav}. The two sweep notebooks are configured for accelerator-backed cloud execution and write to mounted \nolinkurl{/mnt} directories; for local execution, the user must select an available JAX backend and change those output directories to writable local paths.

\bibliography{stable-initialization}
\bibliographystyle{iclr2027_conference}

\appendix
\let\appendixsection\section
\renewcommand{\section}{\FloatBarrier\appendixsection}

\section{Comparison to conventional stable initialization of hidden layers}
\label{app:cvp26-comparison}

We compare our hidden-layer initialization to that of \citet{combette_new_2026}, which represents a state-of-the-art implementation of the conventional wisdom for stable initialization.
Pursuant to the four-step organization described in the Introduction,
there are two significant derivations.

\subsection{Preactivation distribution}

The first major effort is to approximate preactivations \(z_\ell\) as asymptotically Normal with a tunable variance \(\sigma_a^2\):
\(z_\ell \to \mathcal{N}(0, \sigma_a^2).\)

\begin{priorwork}[Thm.~3.1, \cite{combette_new_2026}]
	Consider the sinusoidal network defined above, where for some $c_w, c_b \in \mathbb{R}^+$, and for every layer $\ell \in \{2, \ldots, L\}$, the weight matrix $W_\ell$ is initialized with entries sampled from $\mathcal{U}(-c_w/\sqrt{N},\, c_w/\sqrt{N})$, $W_1$ is sampled from $\mathcal{U}(-w_0/n_0,\, w_0/n_0)$, and the bias $b_\ell$ is initialized with entries sampled from $\mathcal{N}(0, c_b^2)$.
	Let $(z_\ell)_{\ell \in \{1, \ldots, L\}}$ be the preactivation sequence $z_\ell = W_\ell x_{\ell-1} + b_\ell$ for an input $x \in \mathbb{R}^{n_0}$.
	Then, in the limits $N, L \to \infty$, the preactivation sequence $(z_\ell)_{\ell \in \mathbb{N}}$ converges in distribution to $\mathcal{N}(0, \sigma_a^2)$ with
	\begin{equation}
		\sigma_a^2 = c_b^2 + \frac{c_w^2}{6} + \frac{1}{2} \mathcal{W}_{0}\!\left(-\frac{c_w^2}{3} e^{-\frac{c_w^2}{3} - 2 c_b^2}\right), \label{eq:sigma_a}
	\end{equation}
	where $\mathcal{W}_{0}$ is the principal real branch of the Lambert $W$ function.
	The sequence $\bigl(\Var(z_\ell)\bigr)_{\ell \in \mathbb{N}}$ converges to the fixed point $\sigma_a^2$, which is exponentially attractive for all $c_w \neq \sqrt{3}$.
	For $c_w = \sqrt{3}$, the convergence is of rate $\mathcal{O}(1/\ell)$.
\end{priorwork}
The limit \(N \to \infty\) is necessary in order to apply the Central Limit Theorem to establish that the preactivations converge weakly to an isotropic Normal distribution.
(Because the sine activation and its derivatives are bounded and continuous, weak convergence is sufficient to reason about the network's moments.)
The limit \(L \to \infty\) is necessary in order to wash out the transient effect of the embedding layer on the preactivation distribution.
An elegant nonlinear recursion is derived for each preactivation layer's moments in terms of those of the previous layer;
this recursion has an attractive fixed point given by
\eqref{eq:sigma_a}, reflecting a delicate balance between weights and biases.

All of this follows from the need to enforce a Normality Ansatz.
While this Ansatz would make sense for nonperiodic activation functions,
the parallel claim in our result is rendered nearly trivial by the fact that the sine function's domain can, without loss of generality, be wrapped onto a torus.
\begin{ourresult}
	Let \(b_\ell\) be uniformly distributed on the \(N\)-torus.
	Let \(z_\ell = W_\ell x_{\ell-1} + b_\ell\).
	Then \(z_\ell \bmod 2\pi\) is uniformly distributed on the \(N\)-torus.
\end{ourresult}
Adding a uniform phase wipes out the distribution of \(W_\ell x_{\ell-1}\).
This eliminates the need for recursive analysis of the wrapped preactivation distribution because it is toroidally uniform by construction.
There is also no need to take \(L \to \infty\), because we have decoupled the layers and imposed an exactly uniform distribution on every layer's wrapped preactivation.
Furthermore, there is no need to take \(N \to \infty\), because we do not rely on the Central Limit Theorem.
Our wrapped preactivation distribution is exact.

Finally, we observe that \citet[Thm.~3.1]{combette_new_2026} is compatible with the key idea of our theory.
Keeping \(c_w\) finite, take the limit \(c_b^2 \to \infty\).
Using \(\mathcal W_0(0) = 0\),
\eqref{eq:sigma_a}
reduces in this limit to
\begin{align}
	\sigma_a^2 = c_b^2 = \infty.
\end{align}
When a Normal distribution with infinite variance is wrapped onto the torus, the result is a uniform distribution, recovering our finding that the preactivation distribution can be decoupled from the weight scale \(c_w\).

\subsection{Activation gradient distribution}
Using the preactivation distribution, one can now estimate the quadratic mean of the activation function's slope over the preactivation ensemble through the more general identity
\begin{align}
	\Var \sbr{\dpd{}{z_\ell}\sin(z_\ell)}
	 & = \expect\sbr{\cos^2(z_\ell)} = \frac{1 + e^{-2\sigma_a^2}}{2} \quad \text{elementwise}.
\end{align}
This principle may be used to analyze the layer Jacobian entrywise:
\begin{priorwork}[Thm.~3.2, \cite{combette_new_2026}] \label{thm-gradient-distribution}
	Let $\partial x_\ell / \partial x_{\ell-1}$ denote the Jacobian of the $\ell$-th layer, so that
	\begin{equation}
		\dpd{x_\ell}{x_{\ell-1}} = \mathrm{diag}(\cos(z_\ell)) \, W_\ell.
	\end{equation}
	Under the same assumptions as above, and maintaining the limit of large $N$, each entry of this Jacobian has zero mean and variance $\widetilde{\sigma}_\ell^2$, such that the sequence $(N \widetilde{\sigma}_\ell^2)_{\ell \in \mathbb{N}}$ converges to
	\begin{equation}
		\lim_{\ell, N \to \infty} (N \widetilde{\sigma}_\ell^2) = \sigma_g = \frac{c_w^2}{6} \bigl(1 + e^{-2 \sigma_a^2}\bigr). \label{eq:sigma_g}
	\end{equation}
\end{priorwork}
A parallel result can also be stated for our initialization:
\begin{ourresult}
	Let the wrapped preactivation \(z_\ell \bmod 2\pi\) be uniformly distributed on the \(N\)-torus.
	Assume that each entry of \(W_\ell\) has zero mean and variance \(c_w^2 / (3N)\).
	Then each entry of the Jacobian
	\begin{align}
		\dpd{x_\ell}{x_{\ell-1}} = \mathrm{diag}(\cos(z_\ell)) W_\ell
	\end{align}
	has zero mean and variance \(c_w^2/(6N)\).
\end{ourresult}
Note again the lack of asymptotics.
Activation-slope vectors such as \(\cos(z_\ell)\) are i.i.d.

\subsection{What gradient criterion?}
Finally, we turn to the choice of criterion to stabilize (Step 1 of the Introduction).
Whereas \citet{combette_new_2026} stabilize the \emph{entries} of the Jacobian, we stabilize its gramian matrix.
Translated into our convention, their result (Appendix A.4) may be summarized as follows.
\begin{priorwork}[Appendix A.4, \cite{combette_new_2026}]\label{priorwork:gradient-scaling}
	Let $f$ be a sinusoidal network initialized so that each entry of the layerwise Jacobian $\partial x_\ell / \partial x_{\ell-1}$ has variance $\sigma_g^2 / N$.
	Let $\Psi$ be a scalar loss function applied to the network output.
	Then the parameter-wise and input-wise gradient variances scale as
	\begin{align}
		\Var\sbr{\nabla_{W_{\ell,i,j}} \Psi(f(x))} & \sim \frac{(N \sigma_g^2)^{L-\ell-1}}{N}, & \ell & > 1, \\
		\Var\sbr{\nabla_{x_i} \Psi(f(x))}          & \sim (\sigma_g^2)^{L-2}.
	\end{align}
	In particular, gradients vanish or explode exponentially with depth unless $N \sigma_g^2 \approx 1$.
\end{priorwork}
This scaling analysis is a plausible---if not fully rigorous (it applies an unjustified independence heuristic to random matrix-vector products)---corollary of the elementwise Jacobian result of \citet[Thm.~3.2]{combette_new_2026}.
However, it relies on the double limit \(\ell, N \to \infty\) and hides an unknown constant.

Our initialization removes the asymptotic limits from the Jacobian second-moment calculation.
Translating that calculation into a loss-gradient prediction requires an additional assumption on the loss cotangent.
\begin{ourresult}
	Let $f$ be a sinusoidal network with the uniform-phase initialization, let \(2 \leq \ell \leq L\), and let \(g = \nabla_{x_L} \Psi(f(x)) \in \mathbb R^N\) denote the loss cotangent at the output of the hidden map.
	Then the parameter and input gradients decompose as
	\begin{align}
		\nabla_{W_\ell} \Psi(f(x)) & = \Cos(z_\ell) \del{\dpd{x_L}{x_\ell}}^\intercal  g  x_{\ell-1}^\intercal
		\quad \text{and}                                                                                       \\
		\nabla_{x_0} \Psi(f(x))    & = \del{\dpd{x_L}{x_0}}^\intercal g.
	\end{align}
	Using an independence heuristic,
	\begin{align}
		\expect \del{\nabla_{W_\ell} \Psi(f(x))}^\intercal \del{\nabla_{W_\ell} \Psi(f(x))}
		                                                                              & \approx \frac{1}{4} \left\|g\right\|^2 I
		\quad\text{and}
		\label{eq:weight-gradient-gramian}
		\\
		\expect \del{\nabla_{x_0} \Psi(f(x))}^\intercal \del{\nabla_{x_0} \Psi(f(x))} & \approx \left\|g\right\|^2.
	\end{align}
\end{ourresult}
\begin{proof}[Proof of \eqref{eq:weight-gradient-gramian}]
	Most of the independence we need is rigorous (Lemma~\ref{lem:independent-factorization}).
	The remaining approximation treats \(g\) as approximately independent of the hidden-layer initialization \citep{chickering_spectral_2025,yang_spectral_2024}.
	As in \citet{combette_new_2026}, this is an early-training heuristic: for a general loss, the cotangent depends on the initialized network.
	\begin{align*}
		\expect \del{\nabla_{W_\ell} \Psi(f(x))} ^\intercal \del{\nabla_{W_\ell} \Psi(f(x))}
		 & =
		\expect x_{\ell-1} g^\intercal
		\del{\dpd{x_L}{x_\ell}}
		\Cos(z_\ell)
		\Cos(z_\ell) \del{\dpd{x_L}{x_\ell}}^\intercal g  x_{\ell-1}^\intercal
		\\
		 & \approx
		\frac{1}{2} \left\|g\right\|^2
		\expect x_{\ell-1}
		x_{\ell-1}^\intercal
		\\
		 & \approx
		\frac{1}{2} \left\|g\right\|^2
		\expect \sin(z_{\ell-1}) \sin(z_{\ell-1})^\intercal \\
		 & \approx
		\frac{1}{4} \left\|g\right\|^2 I
	\end{align*}
\end{proof}

\subsection{Spectral diagnostics}
We complement the theory of Section~\ref{sec:theory} with mechanistic diagnostics that illuminate how the uniform-phase initialization differs from its predecessors: the singular value spectrum of the end-to-end Jacobian and the visual appearance of random initialization fields.
Although Theorem~\ref{thm:general} and CVP26's asymptotic analysis make different predictions about the distribution of singular values, neither fully determines the spectral shape.
Therefore we inspect it empirically.

We extend \citet[Figure 8]{combette_new_2026}, which plots the singular values of \(J = \dpd{f}{x}\) from greatest to least, by adding the uniform-phase initialization.
To ensure a fair comparison, the input and output layers of our moment-matched uniform-phase initialization are calibrated to match the output statistics of a CVP26 (\(\sigma_a = 0\), \(L=4\)) network, so that the only variable is the hidden-layer initialization.

The uniform-phase initialization exhibits spectral broadening with depth: the largest singular values grow (``rich get richer'') while the smallest decay toward zero (``poor get poorer'').
This is a way of satisfying Theorem~\ref{thm:general}, which fixes the \emph{average} squared singular value at unity while leaving the distribution's shape unconstrained.
If some singular values grow, others must shrink to balance the mean.

This contrasts sharply with CVP26's \(\sigma_a = 0\) initialization, which instead preserves the \emph{largest} singular value (the operator norm) across depth.
There, the ``poor get poorer'' phenomenon occurs for a different reason: the operator norm is held fixed while the average squared singular value decays, so the network's mean amplification shrinks even as its peak amplification holds steady.
The two methods thus embody different design goals: CVP26 \(\sigma_a{=}0\) stabilizes the worst case, while the uniform-phase initialization stabilizes the average case.

The mechanism is also very different.
CVP26 concentrates preactivations on the steep portion of the sine, amplifying slopes uniformly, whereas the uniform-phase initialization averages over the full sine (including its stationary and downward-sloping regions).
The effect is most visible in comparison to SM20, which uses the same weight distribution as the uniform-phase initialization but smaller biases, and consequently suffers gradient blowup.

\begin{figure}[ht]
	\centering
	\includegraphics[width=\textwidth]{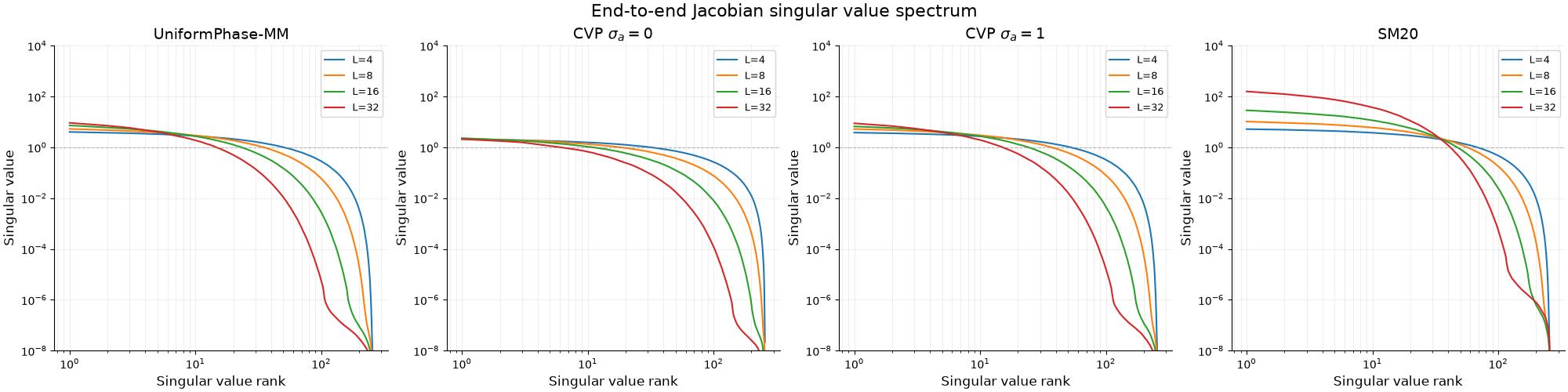}
	\caption{Full singular value spectrum of the end-to-end Jacobian \(\mathbf{J} = \partial\mathbf{h}_L/\partial\mathbf{h}_1\) as a function of depth. Four initialization schemes are compared: uniform-phase MM (moment-matched to CVP26 \(\sigma_a{=}0\), \(L{=}4\)), CVP26 \(\sigma_a{=}0\), CVP26 \(\sigma_a{=}1\), and the original SM20 initialization. Each spectrum is averaged over five independently initialized networks, with 10 sample points on \([-\pi, \pi]\).}
	\label{fig:svd-cvp26}
\end{figure}

\section{Supplement to \S\ref{sec:mup}}\label{app:mup-supplement}
\begin{remark}[\(\mu\)P last-layer scaling]\label{rem:mup-last-layer}
	Whereas moment matching scales the last-layer weights by \(N^{-1/2}\) (\S\ref{sec:mup}), \(\mu\)P insists on scaling them by \(N^{-1}\), which causes the network output to decay as \(N^{-1/2}\).
	See \citet[Appendix J.2.1]{yang_tensor_2022} for the derivation, and Appendix D.2, ibid., for a justification of output weight collapse.
	We note in passing that last-layer weights are intuitively easier to learn because training the last layer while holding other parameters fixed is a convex problem;
	nevertheless, in empirical training, the representation learning task can be sensitive to output scaling.
\end{remark}

\section{Tasks and experimental methods}\label{app:experimental-details}

\paragraph{Common protocol and metrics.}
The fitting experiments represent a sampled field with a coordinate MLP and minimize mean squared error (MSE) over all samples and output dimensions.
Every optimizer step is full-batch.
Model parameters and training arrays use 32-bit floating-point values, while moment estimates are accumulated in 64-bit NumPy and then converted to model precision.
Adam is used without weight decay, gradient clipping, or a learning-rate schedule.
For image and audio fitting we report
\begin{equation}
	\operatorname{SNR}=10\log_{10}\frac{\operatorname{Var}(y)}{\operatorname{MSE}(\hat y,y)},
\end{equation}
where the variance, like the MSE, is computed over the complete evaluation array.
The high-resolution image grid and the full audio signal are evaluation sets rather than independently sampled observations.
We present the image artifacts with matplotlib-default lossy compression in order to reduce the size of this file, and certify that it does not materially affect their interpretation.

\paragraph{Initialization implementations.}
For uniform-phase MM, let a regularly sampled training field contain $m$ values $y_i\in\mathbb R^{N_{\mathrm{out}}}$ and let $J_i$ denote its spatial Jacobian.
We use the sample mean and covariance and form
\begin{equation}
	\widehat\Sigma_\epsilon=\widehat\Sigma+\epsilon I,
	\qquad
	\widehat\Omega
	=\frac{1}{m}\sum_{i=1}^{m}J_i^\intercal
	\widehat\Sigma_\epsilon^{-1}J_i+\epsilon I,
	\qquad \epsilon=10^{-6}.
\end{equation}
Each $J_i$ is estimated channel by channel with \texttt{scipy.ndimage.sobel}; the result is divided by the coordinate spacing and by $2\cdot4^{d-1}$, which accounts for the centered difference and smoothing in the other $d-1$ spatial directions.
The input weights have independent rows distributed as $\mathcal N(0,\widehat\Omega/N_{\mathrm{out}})$, the input phase and every hidden bias are independent $\mathcal U(-\pi,\pi)$ variables, and hidden weights are independent $\mathcal N(0,2/N)$ variables.
The output bias is $\widehat\mu$ and the columns of the output matrix are independent $\mathcal N(0,2\widehat\Sigma_\epsilon/N)$ variables.

The SM20 and CVP26 implementations follow the corresponding PyTorch releases \citep{sitzmann_implicit_2020,combette_new_2026} but are written in JAX/Equinox.
Their first-layer weights are sampled uniformly on $[-1/N_{\mathrm{in}},1/N_{\mathrm{in}}]$ and then multiplied by the reported first-layer $\omega_0$ inside the sine.
For subsequent sine layers,
$W_{ij}\sim\mathcal U[-c/(\omega_h\sqrt N),c/(\omega_h\sqrt N)]$ and the activation is $\sin(\omega_h(Wx+b))$.
SM20 uses $c=\sqrt6$ and PyTorch-style uniform biases; CVP26 $\sigma_a=0$ uses $c=\sqrt3$ and zero hidden biases; and CVP26 $\sigma_a=1$ uses $c=\sqrt{6/(1+e^{-2})}$ and centered Normal hidden biases of variance $c^2e^{-2}/3$ when $\omega_h=1$.
The final linear map has no bias and has weights uniform on $[-\sqrt{3/N}/\omega_h,\sqrt{3/N}/\omega_h]$.
The Tanh (PE-Xavier) baseline encodes $x$ as $[\sin(2\pi Bx),\cos(2\pi Bx)]$, using $N/2$ rows of $B$ sampled independently from $\mathcal N(0,\sigma^2I)$, and applies a Xavier-initialized Tanh MLP.
The ReLU baseline uses Kaiming-uniform weights, while the SiLU and GeLU baselines use Xavier-uniform weights; these MLPs use zero biases.
All baselines match the compared SIRENs in width and depth.

\paragraph{Cameraman image fitting and narrow networks.}
We take \texttt{skimage.data.camera}, downsample it from its native $512\times512$ resolution to $128\times128$ with anti-aliasing, and associate pixels with an endpoint-inclusive Cartesian grid on $[-1,1]^2$.
Training uses all $16{,}384$ low-resolution coordinate--value pairs; evaluation uses all $262{,}144$ native-resolution pairs.
Pixel intensities are represented in $[0,1]$.
The standard experiment uses the architecture and hyperparameters of \citet{combette_new_2026}'s supplementary material: $L=11$, $N=256$, 3000 Adam steps, and learning rate $10^{-4}$; the narrow experiment changes only the width to $N=16$.
The first-layer frequency is $64\pi/\sqrt6$ for SM20 and CVP26, and hidden frequencies are 1.
The positional-encoding scale is $\sigma=32$.

\paragraph{Audio fitting.}
The bundled \texttt{gt\_bach.wav} file is loaded at its native 44.1\,kHz sampling rate, converted to mono, and normalized by its maximum absolute amplitude.
It contains 308,207 samples (approximately 7\,s), which are placed on an endpoint-inclusive grid on $[-100,100]$.
The full signal is used both to optimize and to compute MSE and SNR; there is no held-out or subsampled split.
All models use $L=5$, $N=256$, 5000 full-batch Adam steps, and learning rate $5\times10^{-5}$.
Both baseline SIRENs use first-layer $\omega_0=30$; CVP26 uses hidden $\omega_h=1$, whereas the SM20 replication uses $\omega_h=30$ as in its reference implementation.
The Tanh positional-encoding scale is $\sigma=10$.

\paragraph{Input-frequency sweep.}
For each image and for each of SM20, CVP26 $\sigma_a=0$, and CVP26 $\sigma_a=1$, we evaluate
\begin{equation}
	\omega_0=\frac{64\pi}{\sqrt6}\,10^q,
	\qquad q\in\{-2,-1.8,\ldots,1.8,2\}.
\end{equation}
A baseline uses the same random weights at all 21 frequencies, so frequency is the only within-method variable; uniform-phase MM is initialized and trained once per image.
All runs use $L=11$, $N=256$, and 3000 full-batch Adam steps at learning rate $10^{-4}$, with training and evaluation grids whose largest dimensions are 128 and 512, respectively, and whose aspect ratio is preserved.
The grayscale tasks use \texttt{brick}, \texttt{camera}, \texttt{gravel}, and \texttt{clock}; the color tasks use \texttt{hubble\_deep\_field}, \texttt{skin}, \texttt{astronaut}, and \texttt{coffee} from \texttt{scikit-image}.
RGB values are scaled to $[0,1]$.
For the CIELAB variant, images are resized in RGB and then transformed with \texttt{skimage.color.rgb2lab}; training and MSE evaluation take place in CIELAB, and conversion back to RGB is used only for display.

\paragraph{Width and learning-rate sweep.}
The $\mu$P experiment uses the grayscale Cameraman training field and reports no test loss.
First, at base width $N_0=64$, each prior SIREN initialization is trained at the 21 frequencies above using a learning rate of $10^{-4}$; its frequency is selected by the mean of the 3000 pre-update training losses.
Uniform-phase MM requires no such selection.
Second, for $N\in\{8,16,32,64,128,256,512\}$, each initialization is trained at each $\eta\in\{2^{-18},2^{-17},\ldots,2^2\}$ for 3000 steps.
The plotted criterion is again the time average of the per-step pre-update training MSE, rather than final MSE.
All learning rates for a given initialization and width begin from identical parameters.
In this experiment Adam uses $\epsilon=0$ so that its additive numerical constant does not introduce a width-dependent scale.
Under \textbf{$\mu$P (LR)}, hidden matrices and the output projection use an effective learning rate of $\eta N_0/N$, while input projections and all biases use $\eta$.
Under \textbf{$\mu$P (LR+LL)}, the same optimizer scaling is used and the initialized output weights are additionally multiplied by $\sqrt{N_0/N}$.
The \textbf{standard} condition uses $\eta$ for every parameter and leaves initialization unchanged.

\paragraph{Jacobian-spectrum diagnostic.}
Following the protocol of \citet[Appendix~B.1]{combette_new_2026}, we study the hidden-to-hidden Jacobian $J=\partial h_L/\partial h_1$ for width $N=256$ and depths $L\in\{4,8,16,32\}$.
The uniform-phase input and output moments are estimated from evaluations of a CVP26 $\sigma_a=0$, $L=4$ network with first-layer $\omega_0=30$ and hidden $\omega_h=1$ at 1000 evenly spaced points on $[-\pi,\pi]$.
For each initialization and depth, we compute sorted singular values at 10 uniformly sampled input points for each of five independently initialized networks and average each ordered singular value over the resulting 50 spectra.

\section{Additional experimental results}\label{app:image-fitting-results}
\begin{table}[ht]
	\centering
	\caption{Quantitative results for the Cameraman image-fitting experiment (\S\ref{sec:image-fitting}). Train MSE is computed on the $128 \times 128$ grid; eval MSE and SNR are computed on the $512 \times 512$ grid. Best values are in bold.}
	\label{tab:cameraman}
	\begin{tabular}{lrrr}
		\toprule
		Network                           & Train MSE           & Eval MSE            & SNR (dB)          \\
		\midrule
		\textbf{SIREN (uniform-phase MM)} & $\mathbf{0.000032}$ & $\mathbf{0.002706}$ & $\mathbf{14.89}$  \\
		SIREN (CVP26 $\sigma_a\!=\!0$)    & $0.000127$          & $0.002967$          & $14.49$           \\
		SIREN (SM20)                      & $0.000000$          & $0.017428$          & $\phantom{0}6.80$ \\
		ReLU (Kaiming)                    & $0.001768$          & $0.005201$          & $12.05$           \\
		SiLU (Xavier)                     & $0.007591$          & $0.011650$          & $\phantom{0}8.55$ \\
		GeLU (Xavier)                     & $0.010906$          & $0.015013$          & $\phantom{0}7.45$ \\
		Tanh (PE-Xavier)                  & $0.000274$          & $0.124271$          & $-1.73$           \\
		\bottomrule
	\end{tabular}
\end{table}

\begin{table}[ht]
	\centering
	\caption{Quantitative results for the narrow (\(N=16\)) Cameraman experiment (\S\ref{sec:image-fitting}).}
	\label{tab:cameraman-narrow}
	\begin{tabular}{lrrr}
		\toprule
		Network                           & Train MSE           & Eval MSE            & SNR (dB)          \\
		\midrule
		\textbf{SIREN (uniform-phase MM)} & $\mathbf{0.004117}$ & $\mathbf{0.007824}$ & $\mathbf{10.28}$  \\
		ReLU (Kaiming)                    & $0.006713$          & $0.010689$          & $\phantom{0}8.92$ \\
		GeLU (Xavier)                     & $0.017129$          & $0.021263$          & $\phantom{0}5.94$ \\
		SiLU (Xavier)                     & $0.018928$          & $0.023064$          & $\phantom{0}5.58$ \\
		SIREN (CVP26 $\sigma_a\!=\!0$)    & $0.033747$          & $0.037765$          & $\phantom{0}3.44$ \\
		SIREN (SM20)                      & $0.062279$          & $0.068359$          & $\phantom{0}0.86$ \\
		Tanh (PE-Xavier)                  & $0.073832$          & $0.088348$          & $-0.25$           \\
		\bottomrule
	\end{tabular}
\end{table}

\begin{figure}[ht]
	\centering
	\includegraphics[width=\textwidth]{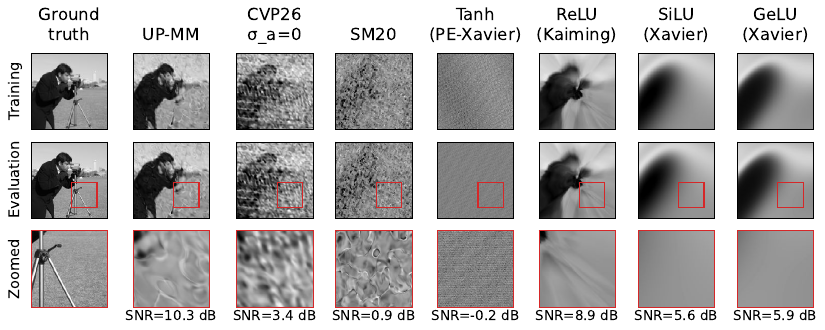}
	\caption{Image fitting with narrow networks (\(N=16\)). Layout as in \figref{fig:cameraman}.}
	\label{fig:cameraman-narrow}
\end{figure}

\begin{figure}[ht]
	\centering
	\includegraphics[width=\textwidth]{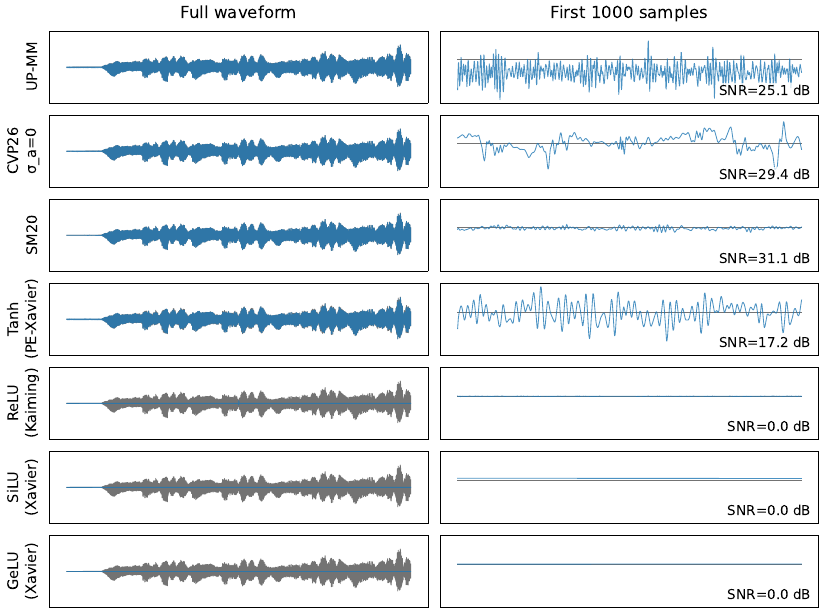}
	\caption{Audio fitting on the Bach recording. One row per initialization, with the ground truth in gray and the prediction in blue. Left: the full waveform, drawn as a per-pixel minimum/maximum envelope of all $308{,}207$ samples. Right: the first $1000$ samples, on its own vertical scale because the recording opens in near-silence. The evaluation SNR is given in each right-hand panel.}
	\label{fig:audio}
\end{figure}

\begin{figure}[ht]
	\centering
	\includegraphics[width=\textwidth]{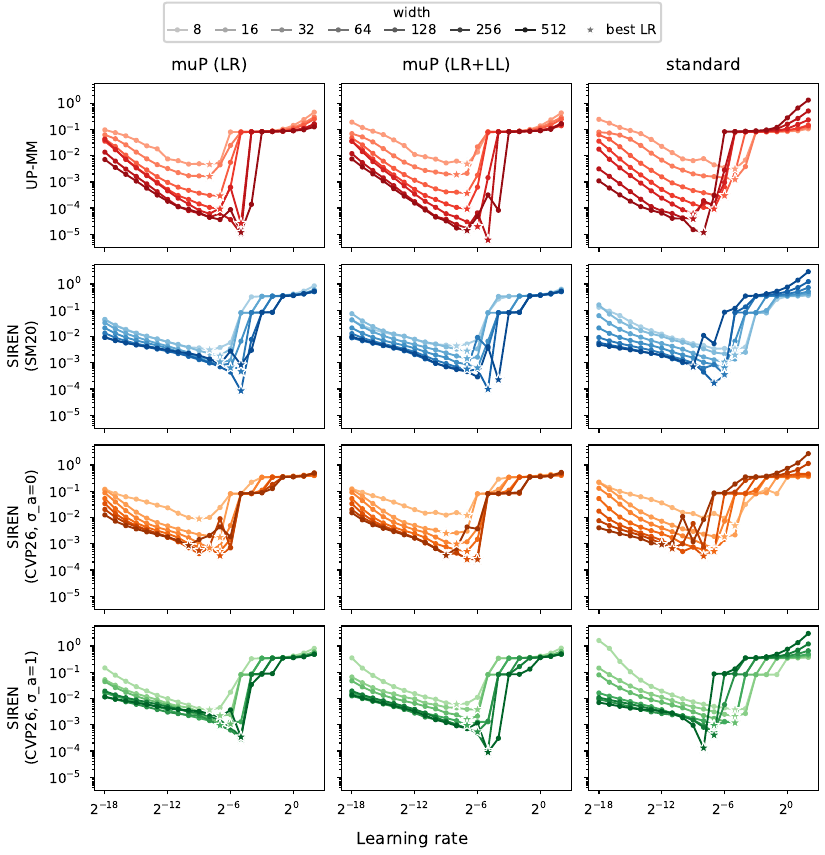}
	\caption{\(\mu\)P learning-rate transfer, grayscale Cameraman, geometrically time-averaged train MSE (\S\ref{sec:mup}). Rows: initialization (uniform-phase MM and the three SIREN baselines at their best \(\omega_0\)); columns: \(\mu\)P LR, \(\mu\)P LR+LL, and standard scaling. One curve per width, shaded from light (\(N=8\)) to dark (\(N=512\)), with a star at each width's best learning rate.}
	\label{fig:mup-lr-train}
\end{figure}

\section{Moment-matched parameters and best-\texorpdfstring{\(\omega_0\)}{omega0} test errors}\label{app:omega-sweep}

\paragraph{Grayscale}
The estimated moment-matched parameters are:
\begin{align}
	\hat\mu & = 0.437 & \hat\Sigma & = 0.00672 & \hat\Omega & = \begin{pmatrix} 561 & -4.2 \\ -4.2 & 1870 \end{pmatrix} \tag{Brick}     \\
	\hat\mu & = 0.506 & \hat\Sigma & = 0.0793  & \hat\Omega & = \begin{pmatrix} 87.0 & 10.9 \\ 10.9 & 115 \end{pmatrix} \tag{Camera}    \\
	\hat\mu & = 0.496 & \hat\Sigma & = 0.0125  & \hat\Omega & = \begin{pmatrix} 1204 & -48.6 \\ -48.6 & 1120 \end{pmatrix} \tag{Gravel} \\
	\hat\mu & = 0.574 & \hat\Sigma & = 0.00664 & \hat\Omega & = \begin{pmatrix} 56.5 & 0.53 \\ 0.53 & 30.7 \end{pmatrix} \tag{Clock}
\end{align}

\begin{table}[ht]
	\centering
	\caption{Grayscale sweep: best test MSE per method. The optimal \(\omega_0\) from the grid search is shown in parentheses. Bold entries are the best per row. Visual comparisons: \figref{fig:best-grayscale}.}
	\label{tab:omega-sweep-gray}
	\begin{tabular}{lcccc}
		\toprule
		Image                           & uniform-phase MM  & SM20                                         & CVP26 \(\sigma_a{=}0\)              & CVP26 \(\sigma_a{=}1\)                       \\
		\midrule
		Brick   & 0.002852          & \textbf{0.002317} {\tiny(\(\omega{=}20.6\))} & 0.002408 {\tiny(\(\omega{=}82.1\))} & 0.002381 {\tiny(\(\omega{=}32.7\))}          \\
		Camera & \textbf{0.002695} & 0.002763 {\tiny(\(\omega{=}20.6\))}          & 0.002912 {\tiny(\(\omega{=}130\))}  & 0.002855 {\tiny(\(\omega{=}51.8\))}          \\
		Gravel & 0.009063          & 0.007982 {\tiny(\(\omega{=}32.7\))}          & 0.007994 {\tiny(\(\omega{=}130\))}  & \textbf{0.007937} {\tiny(\(\omega{=}51.8\))} \\
		Clock   & \textbf{0.000036} & 0.000055 {\tiny(\(\omega{=}20.6\))}          & 0.000050 {\tiny(\(\omega{=}82.1\))} & 0.000042 {\tiny(\(\omega{=}20.6\))}          \\
		\bottomrule
	\end{tabular}
\end{table}

\paragraph{Color (RGB)}
The estimated moment-matched parameters are:
\begin{align}
	\hat\mu & = \begin{pmatrix} 0.073 \\ 0.078 \\ 0.075 \end{pmatrix} & \hat\Sigma & = \begin{pmatrix} 0.0068 & 0.0054 & 0.0056 \\ 0.0054 & 0.0047 & 0.0050 \\ 0.0056 & 0.0050 & 0.0054 \end{pmatrix} & \hat\Omega & = \begin{pmatrix} 2323 & -14.8 \\ -14.8 & 3010 \end{pmatrix} \tag{Hubble}  \\
	\hat\mu & = \begin{pmatrix} 0.809 \\ 0.660 \\ 0.743 \end{pmatrix} & \hat\Sigma & = \begin{pmatrix} 0.0031 & 0.0052 & 0.0030 \\ 0.0052 & 0.0165 & 0.0095 \\ 0.0030 & 0.0095 & 0.0059 \end{pmatrix} & \hat\Omega & = \begin{pmatrix} 699 & 23.9 \\ 23.9 & 888 \end{pmatrix} \tag{Skin}        \\
	\hat\mu & = \begin{pmatrix} 0.555 \\ 0.415 \\ 0.378 \end{pmatrix} & \hat\Sigma & = \begin{pmatrix} 0.097 & 0.075 & 0.066 \\ 0.075 & 0.084 & 0.082 \\ 0.066 & 0.082 & 0.086 \end{pmatrix}          & \hat\Omega & = \begin{pmatrix} 551 & -56.1 \\ -56.1 & 804 \end{pmatrix} \tag{Astronaut} \\
	\hat\mu & = \begin{pmatrix} 0.622 \\ 0.336 \\ 0.202 \end{pmatrix} & \hat\Sigma & = \begin{pmatrix} 0.057 & 0.046 & 0.032 \\ 0.046 & 0.052 & 0.042 \\ 0.032 & 0.042 & 0.038 \end{pmatrix}          & \hat\Omega & = \begin{pmatrix} 400 & 10.1 \\ 10.1 & 630 \end{pmatrix} \tag{Coffee}
\end{align}

\begin{table}[ht]
	\centering
	\caption{Color sweep: best test MSE per method. Conventions as in Table~\ref{tab:omega-sweep-gray}. Visual comparisons: \figref{fig:best-rgb}.}
	\label{tab:omega-sweep-color}
	\begin{tabular}{lcccc}
		\toprule
		Image                                 & uniform-phase MM  & SM20                                & CVP26 \(\sigma_a{=}0\)                      & CVP26 \(\sigma_a{=}1\)              \\
		\midrule
		Hubble       & 0.003773          & 0.003758 {\tiny(\(\omega{=}32.7\))} & \textbf{0.003715} {\tiny(\(\omega{=}130\))} & 0.003808 {\tiny(\(\omega{=}51.8\))} \\
		Skin           & \textbf{0.005784} & 0.006044 {\tiny(\(\omega{=}20.6\))} & 0.006083 {\tiny(\(\omega{=}130\))}          & 0.005870 {\tiny(\(\omega{=}51.8\))} \\
		Astronaut & \textbf{0.004027} & 0.004076 {\tiny(\(\omega{=}32.7\))} & 0.004276 {\tiny(\(\omega{=}130\))}          & 0.004191 {\tiny(\(\omega{=}51.8\))} \\
		Coffee       & \textbf{0.002728} & 0.002785 {\tiny(\(\omega{=}20.6\))} & 0.002847 {\tiny(\(\omega{=}82.1\))}         & 0.002874 {\tiny(\(\omega{=}51.8\))} \\
		\bottomrule
	\end{tabular}
\end{table}

\paragraph{Color (CIELAB)}
The estimated moment-matched parameters in CIELAB space are:
\begin{align}
	\hat\mu & = \begin{pmatrix} 6.47 \\ -0.35 \\ 0.16 \end{pmatrix}   & \hat\Sigma & = \begin{pmatrix} 64.4 & 12.3 & -0.21 \\ 12.3 & 7.55 & 2.56 \\ -0.21 & 2.56 & 5.50 \end{pmatrix}             & \hat\Omega & = \begin{pmatrix} 2328 & -9.7 \\ -9.7 & 3023 \end{pmatrix} \tag{Hubble}    \\
	\hat\mu & = \begin{pmatrix} 72.99 \\ 17.52 \\ -5.39 \end{pmatrix} & \hat\Sigma & = \begin{pmatrix} 90.97 & -99.67 & 36.77 \\ -99.67 & 130.15 & -39.38 \\ 36.77 & -39.38 & 24.02 \end{pmatrix} & \hat\Omega & = \begin{pmatrix} 728 & 27.0 \\ 27.0 & 936 \end{pmatrix} \tag{Skin}        \\
	\hat\mu & = \begin{pmatrix} 47.73 \\ 13.55 \\ 11.95 \end{pmatrix} & \hat\Sigma & = \begin{pmatrix} 843 & -0.11 & 53.5 \\ -0.11 & 302 & 235 \\ 53.5 & 235 & 317 \end{pmatrix}                  & \hat\Omega & = \begin{pmatrix} 571 & -54.0 \\ -54.0 & 822 \end{pmatrix} \tag{Astronaut} \\
	\hat\mu & = \begin{pmatrix} 44.39 \\ 26.60 \\ 32.84 \end{pmatrix} & \hat\Sigma & = \begin{pmatrix} 495 & -36.95 & 126.4 \\ -36.95 & 189 & 145.3 \\ 126.4 & 145.3 & 203.3 \end{pmatrix}        & \hat\Omega & = \begin{pmatrix} 418 & 8.58 \\ 8.58 & 646 \end{pmatrix} \tag{Coffee}
\end{align}
Compared with the RGB parameters above, the CIELAB output covariance \(\hat\Sigma\) has much larger entries (reflecting the wider dynamic range of CIELAB coordinates) and mixed-sign off-diagonal entries (reflecting the decorrelating effect of the opponency axes).
The structure tensor \(\hat\Omega\) is nearly unchanged, confirming that the spatial-frequency content is preserved under the color-space transformation.

\begin{table}[ht]
	\centering
	\caption{CIELAB sweep: best test MSE per method. MSE is computed in CIELAB coordinates. Conventions as in Table~\ref{tab:omega-sweep-gray}. Visual comparisons: \figref{fig:best-cielab}.}
	\label{tab:omega-sweep-lab}
	\begin{tabular}{lcccc}
		\toprule
		Image                                     & uniform-phase MM & SM20                              & CVP26 \(\sigma_a{=}0\)            & CVP26 \(\sigma_a{=}1\)            \\
		\midrule
		Hubble       & \textbf{18.148}  & 18.505 {\tiny(\(\omega{=}13.0\))} & 18.675 {\tiny(\(\omega{=}82.1\))} & 18.692 {\tiny(\(\omega{=}51.8\))} \\
		Skin           & \textbf{45.172}  & 134.00 {\tiny(\(\omega{=}13.0\))} & 145.37 {\tiny(\(\omega{=}8208\))} & 141.75 {\tiny(\(\omega{=}8.21\))} \\
		Astronaut & \textbf{20.011}  & 83.61 {\tiny(\(\omega{=}32.7\))}  & 72.30 {\tiny(\(\omega{=}82.1\))}  & 84.91 {\tiny(\(\omega{=}20.6\))}  \\
		Coffee       & \textbf{15.658}  & 81.19 {\tiny(\(\omega{=}20.6\))}  & 82.15 {\tiny(\(\omega{=}32.7\))}  & 88.28 {\tiny(\(\omega{=}32.7\))}  \\
		\bottomrule
	\end{tabular}
\end{table}

\clearpage

\section{Best-\texorpdfstring{\(\omega_0\)}{omega0} visual comparisons}\label{app:best-omega-figures}
Each figure in this section covers one experiment: rows are the four images, and columns are the ground truth, the uniform-phase MM fit, and the best-tuned (by \(\omega_0\)) fits of SM20, CVP26 \(\sigma_a{=}0\), and CVP26 \(\sigma_a{=}1\) from the sweep of Appendix~\ref{app:omega-sweep}. Each panel is annotated with its test MSE and, for the baselines, the \(\omega_0\) that achieved it.

\begin{figure}[ht]
	\centering
	\includegraphics[width=\textwidth]{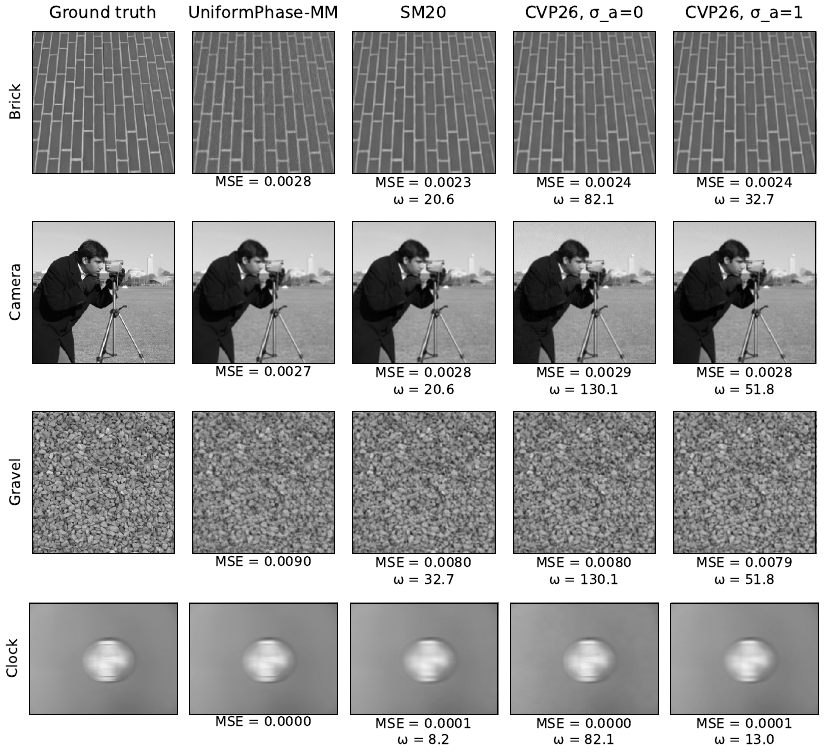}
	\caption{Grayscale best-\(\omega_0\) visual comparisons: Brick, Camera, Gravel, and Clock.}
	\label{fig:best-grayscale}
\end{figure}

\begin{figure}[ht]
	\centering
	\includegraphics[width=\textwidth]{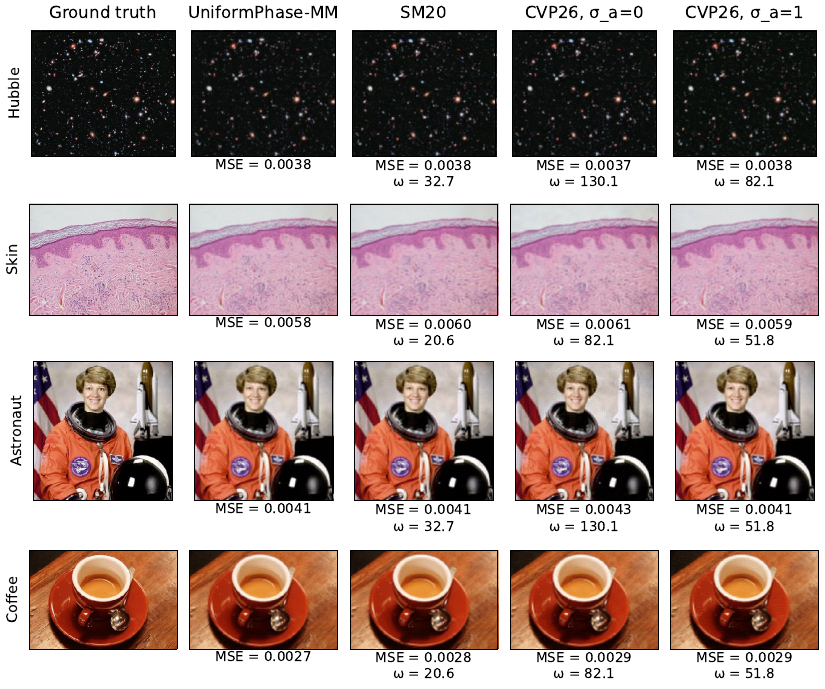}
	\caption{RGB best-\(\omega_0\) visual comparisons: Hubble, Skin, Astronaut, and Coffee. Same layout as \figref{fig:best-grayscale}.}
	\label{fig:best-rgb}
\end{figure}

\begin{figure}[ht]
	\centering
	\includegraphics[width=\textwidth]{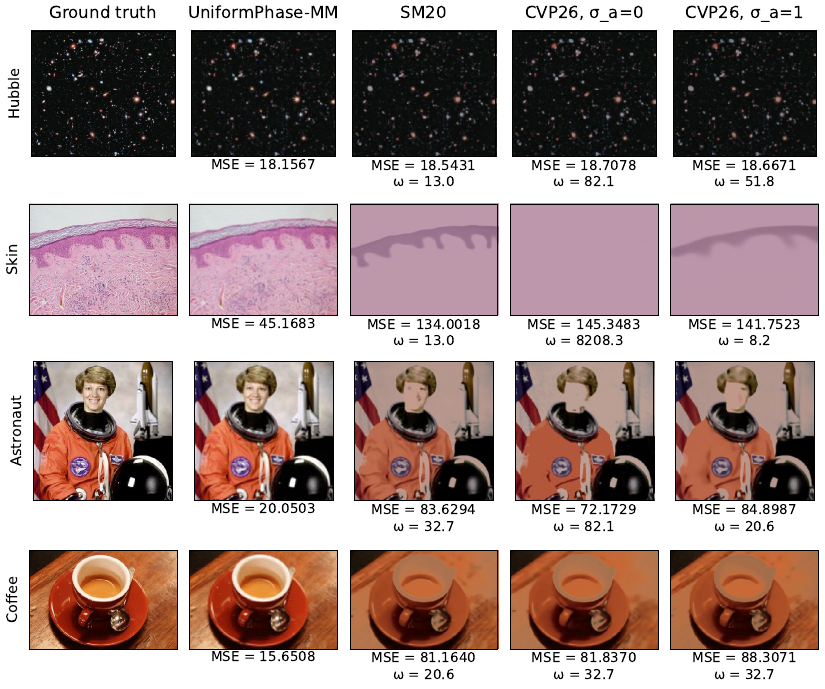}
	\caption{CIELAB best-\(\omega_0\) visual comparisons: Hubble, Skin, Astronaut, and Coffee. Predictions are converted from CIELAB to RGB for display; the reported MSE values are computed in CIELAB coordinates. Same layout as \figref{fig:best-grayscale}.}
	\label{fig:best-cielab}
\end{figure}

\end{document}

%% file: math_commands.tex
\usepackage{amsmath,amsfonts,bm}

\def\figref#1{figure~\ref{#1}}

\def\secref#1{section~\ref{#1}}

\def\eqref#1{equation~\ref{#1}}

\def\1{\bm{1}}

\DeclareMathAlphabet{\mathsfit}{\encodingdefault}{\sfdefault}{m}{sl}
\SetMathAlphabet{\mathsfit}{bold}{\encodingdefault}{\sfdefault}{bx}{n}

\newcommand{\Var}{\mathrm{Var}}

\newcommand{\Cov}{\mathrm{Cov}}
